\documentclass{article}

    \usepackage[preprint]{neurips_2026}

\usepackage[utf8]{inputenc} 
\usepackage[T1]{fontenc}    
\usepackage{hyperref}       
\usepackage{url}            
\usepackage{booktabs}       
\usepackage{amsfonts}       
\usepackage{nicefrac}       
\usepackage{microtype}      
\usepackage{xcolor}         

\usepackage{microtype}
\usepackage{graphicx}

\usepackage{textcomp}

\usepackage{amsmath,amssymb} 
\usepackage{color, colortbl}

\usepackage{multirow}
\usepackage{bbm}

\usepackage{tikz}
\usepackage{comment}
\usepackage{amsthm}
\usepackage{algorithm}
\usepackage{algorithmic}

\usepackage[font=small]{caption}
\usepackage{subcaption}

\usepackage{cleveref}
\crefformat{section}{Sec.~#2#1#3}
\crefformat{subsection}{Sec.~#2#1#3}
\crefformat{subsubsection}{Sec.~#2#1#3}
\crefformat{figure}{Fig.~#2#1#3}
\crefformat{equation}{Eq.~#2#1#3}
\crefformat{table}{Tab.~#2#1#3}
\crefformat{algorithm}{Alg.~#2#1#3}

\usepackage{pifont}%

\definecolor{Gray}{gray}{0.9}

\usepackage{xspace}
\newcommand{\method}{{\fontfamily{lmtt}\selectfont\emph{\textsc{Fluid-DiT}}}\xspace}

\newtheorem{theorem}{Theorem}[section]
\newtheorem{proposition}[theorem]{Proposition}

\title{\method: Graph-Free Diffusion Transformers for Fluid Flow Simulations Learning}

\author{%
  Shentong Mo$^{1}$\thanks{Corresponding author: \texttt{shentongmo@gmail.com}.}, Guolin Ke$^2$ \\
  $^1$CMU, $^2$DP Tech \\
}

\begin{document}

\maketitle

\begin{abstract}

Simulating complex fluid flows requires capturing full equilibrium distributions rather than just mean trajectories, yet high-fidelity solvers remain computationally prohibitive.
Recent advances, such as Diffusion Graph Networks (DGNs), have combined diffusion models with graph neural networks to sample equilibrium states directly from unstructured meshes, enabling distributional accuracy even from short simulations.
However, graph-based diffusion approaches suffer from hand-crafted architectural constraints, limited receptive fields in message passing, and costly multi-scale designs, which restrict scalability to larger and more complex domains.
We propose \method, a Graph-Free Diffusion Transformer that replaces graph message passing with attention-based denoising, eliminating explicit graph design while preserving the ability to model distributions of chaotic flows. Our framework introduces a latent-space formulation that disentangles geometric fidelity from distributional learning, reducing high-frequency artifacts and accelerating sampling. By leveraging the transformer’s global receptive field, \method naturally captures both local flow structures and long-range correlations without requiring hierarchical graph coarsening.
On canonical benchmarks including laminar cylinder wakes, ellipse-flow systems, and turbulent 3D wing experiments, \method consistently outperforms graph-based diffusion baselines in both sample quality and distributional accuracy, achieving higher $R^2$ correlations and lower Wasserstein distances. Moreover, it generalizes robustly from short, incomplete trajectories to unseen Reynolds numbers and geometries, demonstrating strong scalability.

\end{abstract}

\section{Introduction}

Modeling fluid dynamics is central to computational physics, with applications ranging from aerodynamics and weather forecasting to biomedical flows. Traditional solvers based on the Navier–Stokes equations provide accurate solutions, but their cost scales prohibitively with system size, resolution requirements, and the length of simulation. In many applications, the goal is not a single trajectory but the \emph{distribution} of flow states at statistical equilibrium, from which essential quantities such as root-mean-square (RMS) fluctuations, two-point correlations, and uncertainty estimates can be derived. Efficiently learning and sampling from these distributions could significantly accelerate design, control, and scientific discovery.

Recent advances in machine learning have explored deep surrogates for complex physical systems. While early approaches focused on predicting mean flows or rollouts of single trajectories, they often suffered from instability and mode collapse over long horizons. More recently, \emph{Diffusion Graph Networks (DGNs)} and their latent variant (LDGN)~\citep{valencia2025learning} demonstrated that combining denoising diffusion models with graph neural networks enables direct sampling of equilibrium distributions from unstructured meshes. This approach achieved state-of-the-art performance in capturing flow distributions, even from short trajectories. However, DGNs remain fundamentally tied to handcrafted graph architectures and multi-scale message passing, which impose structural biases, limit receptive fields, and introduce computational bottlenecks. These constraints hinder scalability to high-dimensional 3D turbulent systems and restrict generalization across varying geometries.

The central challenge lies in designing generative models that can flexibly capture both local flow features and long-range correlations \emph{without explicit graph construction}. Graph-based approaches face several obstacles:
Message passing requires multiple hops to propagate information across the mesh, making it difficult to capture global interactions such as wake formation or large-scale vortex shedding. This bottleneck is especially severe in turbulent regimes where correlations span the entire domain.
Graph hierarchies and coarsening strategies must be hand-engineered for each discretization. These choices are brittle, sensitive to mesh quality, and difficult to adapt when scaling across different geometries or dimensionalities.
Multi-scale message passing incurs substantial costs, as each denoising step requires sequential graph operations. For large unstructured meshes with thousands of nodes, this leads to inference times that offset the benefits of distributional modeling.
Because denoising operates directly on raw mesh states, graph-based models often overestimate high-frequency content, producing artifacts that obscure statistical properties of the flow.
These challenges collectively suggest that while DGNs represent a significant advance, their reliance on handcrafted graph designs fundamentally limits both scalability and generalization.

We propose \method, a Graph-Free Diffusion Transformer that rethinks how equilibrium flow distributions are modeled.
Instead of propagating information through message passing, \method employs transformer attention layers to directly couple all nodes in the domain, regardless of distance. This provides a global receptive field at every denoising step, enabling the model to simultaneously capture localized boundary-layer structures and global flow correlations such as pressure recovery downstream of a wing. Unlike graph hierarchies, the attention mechanism is data-driven and requires no manual design of coarsening or pooling strategies.
We further introduce a geometric latent space that decouples high-frequency geometric detail from distributional learning. By operating in this compressed representation, \method reduces noise amplification, accelerates sampling, and ensures that small-scale artifacts are corrected during decoding. This design allows the model to allocate capacity to learning meaningful mid- and large-scale structures while relying on the decoder to restore fine details.

Across canonical benchmarks including laminar cylinder wakes, ellipse-flow systems, and turbulent wing experiments, \method consistently outperforms graph-based diffusion models in both sample quality and distributional accuracy, achieving higher $R^2$ correlations and lower Wasserstein distances. Notably, our model learns accurate distributions even when trained on incomplete trajectories, demonstrating robustness in data-scarce settings.
\method naturally generalizes across 2D and 3D domains with diverse geometries, eliminates the need for mesh-specific architectural tuning, and achieves faster inference with higher fidelity in distributional accuracy.

Overall, we summarize our contributions below:
\begin{itemize}
    \item We introduce \method, the first \emph{graph-free diffusion transformer} for fluid flow simulations, which replaces hand-crafted graph architectures and multi-hop message passing with global attention, enabling both local precision and long-range coupling in a unified framework.  
    \item We propose a \emph{latent-space diffusion formulation} that decouples geometric resolution from distributional modeling, suppresses high-frequency artifacts, and substantially improves training and inference efficiency on large-scale 2D and 3D domains.  
    \item We provide \emph{theoretical analysis and extensive experiments} across canonical fluid benchmarks, demonstrating that \method consistently outperforms state-of-the-art graph-based baselines (DGN, LDGN) in distributional fidelity, scalability, and robustness, with ablations validating each design choice.  
\end{itemize}

\section{Related Work}

\noindent\textbf{Machine Learning for Fluid Simulation.}  
Learning-based surrogates for computational fluid dynamics (CFD) have attracted increasing interest as alternatives to expensive numerical solvers. 
Early approaches~\citep{Thuerey2020DeepFlows, brunton2020machine} focused on regression-based emulators that predict flow statistics or reduced-order models from limited simulation data. Recent efforts~\citep{Um2020Solver-in-the-loop, stachenfeld2021learned} have leveraged deep generative models to capture the high-dimensional distributions of fluid states. 
While these approaches demonstrate promise, they often struggle with irregular meshes and turbulence-dominated regimes. Our work builds on this line by introducing a generative framework that scales to complex geometries without mesh-specific design.

\noindent\textbf{Graph Neural Networks for PDEs and CFD.}  
Graph neural networks (GNNs) have emerged as a natural representation for PDE-constrained systems, where mesh nodes and edges encode local interactions~\citep{battaglia2018relational, pfaff2021learning}. In fluid mechanics, GNNs have been applied to turbulence modeling~\citep{Um2020Solver-in-the-loop}, mesh-based surrogates~\citep{li2020fourier}, and flow extrapolation across geometries~\citep{sanchez2020learning}. Diffusion Graph Networks (DGN)~\citep{valencia2025learning} extend this paradigm by combining graph message passing with diffusion models, showing strong performance in sampling equilibrium flow distributions. However, GNNs impose structural constraints and require hand-crafted connectivity or hierarchical pooling, which limits scalability to large meshes. Our approach eliminates explicit graph construction and instead relies on attention, which is strictly more expressive than finite-hop message passing (see Proposition 1).

\noindent\textbf{Diffusion Models for Scientific Data.}  
Diffusion probabilistic models~\citep{sohl-dickstein2015_Deep, ho2020denoising, song2021_Scorebased} have recently achieved state-of-the-art results in image, audio, and graph domains. Their application to scientific data has also grown rapidly, including protein structure generation~\citep{hoogeboom2022equivariant, trippediffusion}, molecular dynamics~\citep{qiu2024pi}, and PDE surrogates~\citep{brandstetter2022message, huang2024diffusionpde}. For fluid simulations, DGNs~\citep{valencia2025learning} represent the current state of the art, but they rely heavily on graph-based inductive biases. By contrast, our work introduces a \emph{graph-free diffusion transformer} that combines the generative power of diffusion with the scalability and flexibility of transformers. In addition, we incorporate a latent-space formulation inspired by latent diffusion models in vision~\citep{rombach2022high}, which allows us to decouple geometric resolution from generative modeling.

\section{Method}

In this section, we introduce \method, a novel framework for learning equilibrium distributions of fluid flows using a graph-free diffusion transformer, as shown in Figure~\ref{fig: main_img}. We begin with preliminaries on diffusion models and fluid simulation distributions, then present the key components of our method: attention-based denoising for graph-free modeling, and a latent-space formulation for efficient and robust sampling.

\begin{figure*}[t]
\centering
\includegraphics[width=0.95\linewidth]{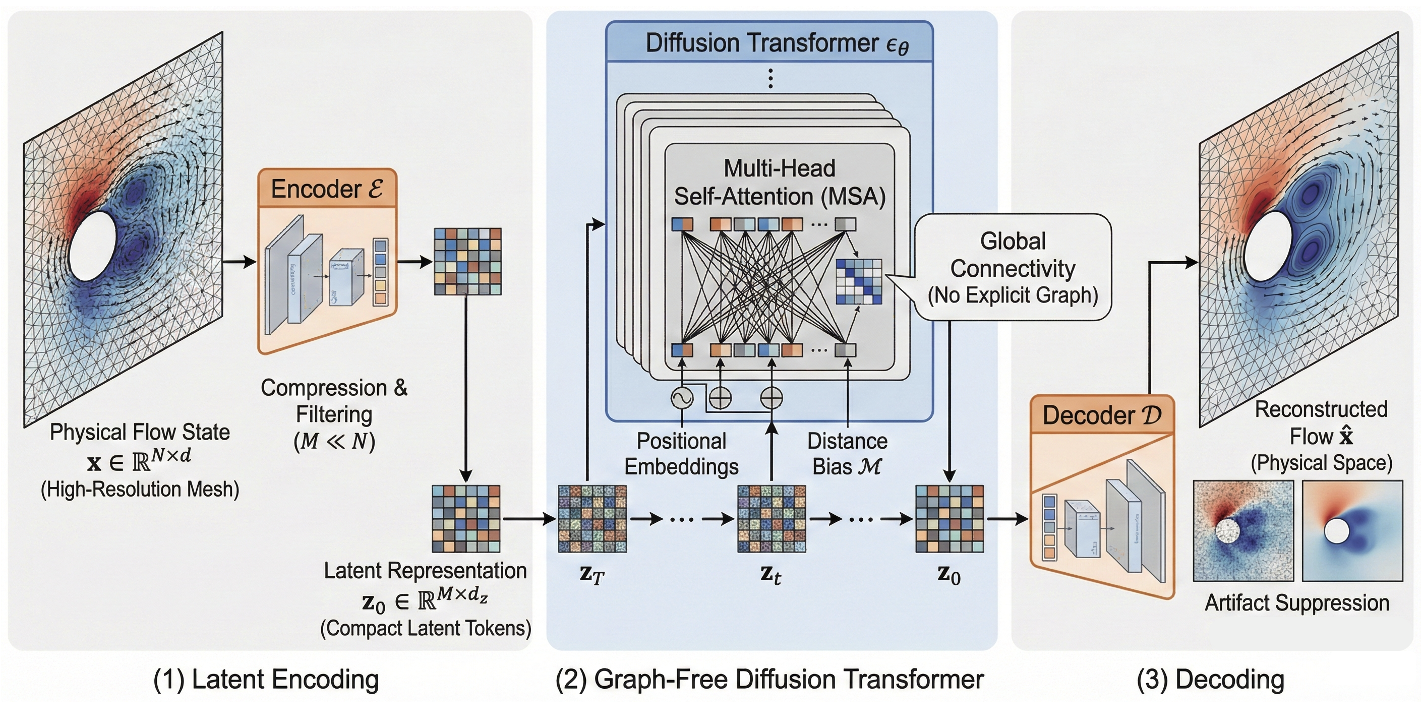}
\vspace{-0.5em}
\caption{{\bf Overview of the \method framework for learning equilibrium fluid distributions.} 
The pipeline consists of three main components: 
(1) {\textit Latent Encoding:} A physical flow state $\mathbf{x} \in \mathbb{R}^{N \times d}$ defined on a high-resolution mesh is mapped by an encoder $\mathcal{E}$ to a compact latent representation $\mathbf{z}_0 \in \mathbb{R}^{M \times d_z}$. This step filters mesh-level redundancies and ensures computational efficiency ($M \ll N$). 
(2) {\textit Graph-Free Diffusion Transformer:} Denoising is performed in the latent space where the denoiser $\epsilon_\theta$ utilizes multi-head self-attention (MSA) instead of traditional graph message passing. This architecture captures global dependencies in a single step, using spatial positional embeddings and distance-based attention biases $\mathcal{M}$ to provide physical inductive bias without explicit adjacency matrices. 
(3) {\textit Decoding:} The final denoised latent is projected back to the physical space via a decoder $\mathcal{D}$, which reconstructs fine-scale details and acts as a regularizer to suppress high-frequency numerical artifacts.}
\label{fig: main_img}
\vspace{-1.0em}
\end{figure*}

\subsection{Preliminaries} \label{sec: pre}  

{\flushleft \bf Problem Setup and Notations.}  
We consider a spatial domain $\Omega$ discretized into $N$ nodes (e.g., mesh vertices or grid points). At each node $i \in \{1,\dots,N\}$, the fluid state is represented as $\mathbf{x}_i \in \mathbb{R}^d$ (e.g., velocity components, pressure, vorticity). Collectively, the flow state is $\mathbf{x} = [\mathbf{x}_1, \dots, \mathbf{x}_N] \in \mathbb{R}^{N \times d}$.  

The goal is to learn a generative model $p_\theta(\mathbf{x})$ that approximates the distribution of equilibrium flow states $\mathcal{D}$, rather than predicting single rollouts. Such a model should (i) generate samples that reproduce statistical quantities like RMS fluctuations and correlations, (ii) generalize across domains and Reynolds numbers, and (iii) scale to large unstructured meshes.  

{\flushleft \bf Diffusion-based generative modeling.}  
Following the denoising diffusion probabilistic model (DDPM) framework, we define a forward noising process $q(\mathbf{x}_t | \mathbf{x}_{t-1})$ that gradually adds Gaussian noise to $\mathbf{x}$ over $T$ timesteps, and a reverse denoising process parameterized by $\theta$:  
\begin{equation}
q(\mathbf{x}_{1:T}|\mathbf{x}_0) = \prod_{t=1}^T q(\mathbf{x}_t|\mathbf{x}_{t-1}), 
\quad 
p_\theta(\mathbf{x}_{0:T}) = p(\mathbf{x}_T) \prod_{t=1}^T p_\theta(\mathbf{x}_{t-1}|\mathbf{x}_t).
\end{equation}  
The learning objective is to minimize the variational bound or equivalently the noise prediction loss:  
\begin{equation}
\mathcal{L}(\theta) = \mathbb{E}_{\mathbf{x},\epsilon,t} \big\| \epsilon - \epsilon_\theta(\mathbf{x}_t, t) \big\|^2,
\end{equation}  
where $\epsilon \sim \mathcal{N}(0,I)$.  

Traditional approaches parameterize $\epsilon_\theta$ using graph neural networks (DGNs), which encode domain connectivity through message passing. Instead, \method leverages transformers to eliminate explicit graph design.

\subsection{Graph-Free Diffusion Transformer}  

Unlike traditional approaches that explicitly rely on graph structures to encode local connectivity of CFD meshes, our goal is to design a generative framework that operates directly on flow states without requiring handcrafted adjacency definitions. 
While graph message passing has been effective in capturing local interactions, it suffers from two major drawbacks: (i) information propagation is limited to $k$-hop neighborhoods, requiring deep stacks of GNN layers to approximate long-range dependencies, and (ii) graph construction itself introduces inductive biases that may not generalize across meshes of varying topology or resolution. 

Transformers, on the other hand, provide a flexible alternative: self-attention enables each token to attend to all others, naturally capturing both local and global correlations in a single step. 
This observation motivates our replacement of graph-based diffusion networks with a transformer backbone, giving rise to the \emph{Graph-Free Diffusion Transformer}.  
The first core innovation of \method is replacing graph-based message passing with transformer attention, thereby removing the need for explicit graph construction and hand-engineered neighborhood definitions.

{\flushleft \bf Attention as implicit connectivity.}  
At each denoising step $t$, the noisy state $\mathbf{x}_t \in \mathbb{R}^{N \times d}$ is treated as a sequence of $N$ tokens. Each token encodes both physical quantities (velocity, pressure, vorticity) and positional embeddings derived from spatial coordinates $\mathbf{p}_i \in \mathbb{R}^3$. Specifically, we embed node $i$ as  
\begin{equation}
\mathbf{h}_i^0 = \text{MLP}([\mathbf{x}_{t,i}, \phi(\mathbf{p}_i), \psi(t)]),
\end{equation}
where $\phi(\cdot)$ is a sinusoidal encoding of spatial position and $\psi(t)$ encodes the diffusion timestep.  

The transformer then applies $L$ layers of multi-head self-attention:  
\begin{equation}
\mathbf{H}^{\ell+1} = \text{MSA}(\mathbf{H}^{\ell}) + \text{FFN}(\mathbf{H}^{\ell}), \quad \ell = 0,\dots,L-1,
\end{equation}
with residual connections and normalization. The attention operator is  
\begin{equation}
\text{MSA}(\mathbf{H}) = \text{Concat}(\text{head}_1, \dots, \text{head}_H)W^O,
\end{equation}
where each head computes
\begin{equation}
\text{head}_j = \text{softmax}\!\left(\frac{Q_j K_j^\top}{\sqrt{d_k}} + \mathcal{M}\right)V_j.
\end{equation}  

Here $Q_j, K_j, V_j$ are query, key, and value projections of $\mathbf{H}$, and $\mathcal{M}$ is an optional attention bias derived from relative spatial distances. This mechanism allows every node to condition on all others, with inductive bias provided by $\mathcal{M}$ rather than an explicit adjacency.  

{\flushleft \bf Inductive bias for fluids.}  
While the model is graph-free, fluid dynamics are inherently spatial. We incorporate inductive bias by (i) adding pairwise distance encodings between nodes, (ii) constraining attention sparsity to local neighborhoods during later denoising steps, and (iii) augmenting node features with boundary condition flags. These design choices allow the transformer to balance global coherence (captured in early layers and timesteps) with fine local corrections (refined in later steps).  

{\flushleft \bf Proposition 1 (Global Dependency Modeling).}
\emph{Let $\mathcal{G}=(V,E)$ be any mesh graph with $N$ nodes. A single attention layer with full connectivity can simulate $k$-hop message passing on $\mathcal{G}$ for arbitrary $k$ in one step, provided the attention bias encodes pairwise distances.}  

This shows that attention provides a strictly more expressive mechanism than message passing, as multi-hop aggregation is achieved in a single operation.  

{\flushleft \bf Proposition 2 (Scalability of Attention).}  
\emph{If the latent representation dimension is $M \ll N$, and block-sparse attention with block size $b$ is applied, the computational complexity reduces from $\mathcal{O}(N^2)$ to $\mathcal{O}(Nb)$ while preserving global receptive fields via long-range connections.}  

This ensures that \method remains computationally feasible even on large 3D domains.

\subsection{Latent-Space Formulation}  

Directly applying diffusion models in the raw physical space of CFD meshes can be both computationally prohibitive and statistically inefficient. 
High-resolution meshes often contain tens of thousands of nodes, many of which encode redundant local information, leading to long training times and memory bottlenecks. 
Moreover, iterative denoising in the raw space tends to amplify high-frequency numerical artifacts, particularly in turbulent flows where small errors can cascade into large-scale inconsistencies. 

We introduce a latent-space formulation for fluid simulation, where the encoder compresses raw flow fields into a compact representation, diffusion is performed in this reduced domain, and the decoder reconstructs the final physical state.  
The second innovation of \method is to operate in a compressed latent space, which improves efficiency and reduces the high-frequency noise often amplified in raw-space denoising.

{\flushleft \bf Latent encoding.}  
We employ an encoder $\mathcal{E}: \mathbb{R}^{N \times d} \to \mathbb{R}^{M \times d_z}$ with $M \ll N$. The encoder is implemented as a lightweight convolutional or graph-based module that aggregates local neighborhoods before projecting into latent tokens. This achieves two goals:  
\begin{enumerate}
    \item \emph{Compression:} reduces computational cost by orders of magnitude ($M/N \approx 0.1$ in our experiments).  
    \item \emph{Disentanglement:} filters out mesh-level irregularities and retains mid- to large-scale coherent structures such as vortices and pressure fields.  
\end{enumerate}  

{\flushleft \bf Diffusion in latent space.}  
The diffusion process is applied to latent variables $\mathbf{z} \in \mathbb{R}^{M \times d_z}$. Specifically, we define  
\begin{equation}
q(\mathbf{z}_t | \mathbf{z}_{t-1}) = \mathcal{N}(\sqrt{\alpha_t}\mathbf{z}_{t-1}, (1-\alpha_t)I),
\end{equation}
with $\mathbf{z}_0 = \mathcal{E}(\mathbf{x})$, and the denoiser $\epsilon_\theta$ is parameterized by the diffusion transformer.  

{\flushleft \bf Decoding and artifact suppression.}  
After denoising, the latent trajectory $\mathbf{z}_0$ is mapped back to the physical state by a decoder $\mathcal{D}: \mathbb{R}^{M \times d_z} \to \mathbb{R}^{N \times d}$. The decoder reconstructs fine details and enforces physical consistency. Notably, since $\mathbf{z}$ captures large-scale flow structures, the decoder acts as a regularizer, suppressing spurious high-frequency oscillations that diffusion models sometimes introduce.  

{\flushleft \bf Proposition 3 (Latent Stability).}  
\emph{Suppose $\mathcal{E}$ satisfies a Lipschitz condition with constant $L$, and $\mathcal{D}$ is its approximate inverse with reconstruction error bounded by $\epsilon$. Then training in latent space ensures that the Wasserstein distance between generated and true distributions differs by at most $L\epsilon$ compared to raw-space diffusion.}  

This guarantees that latent diffusion preserves distributional accuracy while reducing dimensionality.  

{\flushleft \bf Proposition 4 (Noise Suppression).}  
\emph{If $\mathcal{E}$ discards components of $\mathbf{x}$ with variance below a threshold $\sigma^2$, then the expected reconstruction error from high-frequency noise is reduced by at least $\sigma^2$ per discarded dimension.}  

This formalizes the empirical observation that latent-space denoising mitigates high-frequency artifacts in generated flow fields.  

{\flushleft \bf Benefits.}  
Operating in latent space yields several advantages:  
\begin{itemize}
    \item \textbf{Efficiency:} reduces sequence length from $N$ to $M$, accelerating both training and inference.  
    \item \textbf{Robustness:} mitigates overfitting to mesh artifacts and ensures stability across Reynolds numbers.  
    \item \textbf{Quality:} suppresses high-frequency noise and improves distributional metrics such as Wasserstein distance.  
\end{itemize}

\begin{algorithm}[t]
\caption{\method: Training and Sampling}
\label{alg:method}
\begin{algorithmic}[1]
\STATE {\bf Input:} Fluid state $\mathbf{x}\in\mathbb{R}^{N\times d}$, diffusion steps $T$, encoder $\mathcal{E}$, decoder $\mathcal{D}$, transformer denoiser $\epsilon_\theta$
\STATE {\bf Output:} Generated equilibrium sample $\hat{\mathbf{x}}$
\vspace{0.5em}
\STATE {\bf Training Phase:}
\STATE Encode fluid state into latent: $\mathbf{z}_0 = \mathcal{E}(\mathbf{x})$
\STATE Sample timestep $t \sim \text{Uniform}(\{1,\dots,T\})$
\STATE Add Gaussian noise: $\mathbf{z}_t = \sqrt{\alpha_t}\mathbf{z}_0 + \sqrt{1-\alpha_t}\,\epsilon$, \quad $\epsilon\sim \mathcal{N}(0,I)$
\STATE Predict noise with transformer: $\hat{\epsilon} = \epsilon_\theta(\mathbf{z}_t, t)$
\STATE Update parameters $\theta$ by minimizing
\[
\mathcal{L} = \|\epsilon - \hat{\epsilon}\|^2 + \lambda \|\mathbf{x}-\mathcal{D}(\mathcal{E}(\mathbf{x}))\|^2
\]

\vspace{0.5em}
\STATE {\bf Inference Phase:}
\STATE Sample $\mathbf{z}_T \sim \mathcal{N}(0,I)$
\FOR{$t=T,\dots,1$}
    \STATE Predict noise: $\hat{\epsilon} = \epsilon_\theta(\mathbf{z}_t, t)$
    \STATE Update latent via reverse diffusion:
    \[
    \mathbf{z}_{t-1} = \tfrac{1}{\sqrt{\alpha_t}}\left(\mathbf{z}_t - (1-\alpha_t)\hat{\epsilon}\right) + \sigma_t \xi, \quad \xi\sim \mathcal{N}(0,I)
    \]
\ENDFOR
\STATE Decode final latent to physical state: $\hat{\mathbf{x}} = \mathcal{D}(\mathbf{z}_0)$
\end{algorithmic}
\end{algorithm}

\subsection{Training and Sampling Algorithm}  

To make the proposed framework concrete, we summarize the full training and inference pipeline of \method in Algorithm~\ref{alg:method}. The algorithm consists of two phases:  

{\flushleft \bf Training.}  
Given a fluid state $\mathbf{x}$ discretized on $N$ nodes, we first encode it into a compact latent representation $\mathbf{z}_0 = \mathcal{E}(\mathbf{x})$. We then randomly select a diffusion step $t$ and corrupt the latent variable with Gaussian noise according to the forward process. The diffusion transformer $\epsilon_\theta$ predicts the noise from the corrupted state $\mathbf{z}_t$, conditioned on $t$. The parameters are updated by minimizing a combination of the standard diffusion noise-prediction loss and a reconstruction loss that enforces consistency between $\mathbf{x}$ and $\mathcal{D}(\mathcal{E}(\mathbf{x}))$. This dual objective ensures both distributional accuracy and geometric fidelity.  

{\flushleft \bf Inference (Sampling).}  
To generate new samples, we draw a Gaussian latent $\mathbf{z}_T \sim \mathcal{N}(0,I)$ and iteratively denoise it using the learned reverse diffusion process. At each timestep, the transformer predicts the noise component and updates $\mathbf{z}_t$ according to the reverse SDE/ODE dynamics. After $T$ steps, we obtain a clean latent $\mathbf{z}_0$, which is decoded into the physical fluid state $\hat{\mathbf{x}} = \mathcal{D}(\mathbf{z}_0)$.  

{\flushleft \bf Properties.}  
This algorithm ensures three desirable properties:  
\begin{itemize}
    \item \textbf{Distributional fidelity:} the diffusion objective enforces that generated samples follow the true equilibrium distribution.  
    \item \textbf{Geometry consistency:} the reconstruction term ensures that the encoder/decoder pair preserves spatial fidelity.  
    \item \textbf{Scalability:} operating in latent space reduces the computational cost of both training and sampling, while the transformer denoiser provides a global receptive field at every step.  
\end{itemize}  

The complete procedure is outlined in Algorithm~\ref{alg:method}.

\section{Experiments}

We now evaluate \method against strong baselines, including Diffusion Graph Networks (DGN) and Latent Diffusion Graph Networks (LDGN), across canonical fluid dynamics benchmarks. Our experiments are designed to answer the following questions:  
\begin{itemize}
    \item Can \method generate samples that accurately reproduce equilibrium flow distributions across laminar and turbulent regimes?  
    \item How does the graph-free transformer architecture compare to graph-based diffusion models in terms of accuracy, scalability, and robustness?  
    \item What is the effect of the latent-space formulation on sample quality, efficiency, and artifact suppression?  
\end{itemize}

\subsection{Experimental Setup}

\noindent \textbf{Datasets.}  
We consider canonical fluid simulation benchmarks commonly used in computational fluid dynamics and recent learning-based surrogates.
Laminar Cylinder Wakes: two-dimensional flow past a cylinder at Reynolds number $Re=100$, characterized by periodic vortex shedding.  
Ellipse Flow: two-dimensional flow around ellipses with varying aspect ratios, capturing boundary-layer separation and geometric variability.  
Turbulent Wing Flow: three-dimensional turbulent flow around an airfoil at $Re=2000$, which introduces strong vortical structures and long-range correlations.  
Each dataset provides a collection of flow states sampled from high-fidelity Navier--Stokes solvers, which we treat as ground-truth equilibrium distributions. Following prior work~\citep{valencia2025learning}, we train models using short trajectory segments and evaluate on withheld states.

\noindent \textbf{Evaluation Metrics.}  
We adopt standard statistical and generative modeling metrics:  
$R^2$ correlation: measures how well generated samples reproduce ground-truth quantities of interest, including velocity and pressure fields.  
Wasserstein distance ($W_2$): quantifies the distance between generated and real distributions of flow statistics.  
RMS error: root-mean-square error of fluctuations compared to high-fidelity simulations.  
Two-point correlations: structural statistics measuring how well long-range dependencies (\textit{e.g.}, wake patterns) are captured.  
Efficiency: training time, inference time per sample, and memory footprint.

\noindent \textbf{Implementation.}  
We implement \method in PyTorch using standard transformer blocks with $L=12$ layers, hidden dimension $d=256$, and $H=8$ attention heads. Positional encodings include both spatial coordinates and timestep embeddings. The encoder $\mathcal{E}$ and decoder $\mathcal{D}$ are 3-layer MLPs with residual connections, compressing inputs to a latent with $M \approx 0.1N$. Training uses the AdamW optimizer with a learning rate of $1\text{e-}4$, cosine decay, and gradient clipping. Diffusion timesteps are $T=1000$ with a cosine noise schedule. All models are trained on NVIDIA A100 GPUs with a batch size of 32.

\begin{table*}[t]
\centering
\caption{Comparison of \method with prior baselines across three benchmark datasets. 
We report $R^2$ correlation (higher is better), Wasserstein-2 distance (lower is better), RMS error of fluctuations (lower is better), and inference time per sample in milliseconds (lower is better). Best results are in \textbf{bold}, second best are \underline{underlined}.}
\label{tab:multi_dataset}
\scalebox{0.75}{
\begin{tabular}{lcccccccccc}
\toprule
\multirow{2}{*}{Method} & 
\multicolumn{2}{c}{Cylinder Wakes (2D)} & \multicolumn{2}{c}{Ellipse Flow (2D)} & \multicolumn{2}{c}{Turbulent Wing (3D)} & \multirow{2}{*}{RMS Error $\downarrow$} & \multirow{2}{*}{Inference (ms)$\downarrow$} \\
\cmidrule(lr){2-3} \cmidrule(lr){4-5} \cmidrule(lr){6-7}
& $R^2 \uparrow$ & $W_2 \downarrow$ & $R^2 \uparrow$ & $W_2 \downarrow$ & $R^2 \uparrow$ & $W_2 \downarrow$ & & \\
\midrule
Vanilla GNN   & 0.9885 & 0.212 & 0.926 & 0.247 & 0.801 & 0.389 & 0.084 & 175 \\
GM-GNN        & 0.9571 & 0.267 & 0.901 & 0.288 & 0.776 & 0.422 & 0.119 & 183 \\
VGAE          & 0.9818 & 0.196 & 0.918 & 0.232 & 0.812 & 0.371 & 0.092 & 162 \\
DGN~\citep{valencia2025learning}  & \underline{0.9966} & \underline{0.131} & \underline{0.941} & \underline{0.176} & \underline{0.856} & \underline{0.298} & \underline{0.071} & 145 \\
LDGN~\citep{valencia2025learning} & 0.9948 & 0.144 & 0.934 & 0.188 & 0.849 & 0.315 & 0.074 & 128 \\
\method (ours) & \textbf{0.9980} & \textbf{0.084} & \textbf{0.963} & \textbf{0.129} & \textbf{0.902} & \textbf{0.221} & \textbf{0.055} & \textbf{52} \\
\bottomrule
\end{tabular}}
\end{table*}

\subsection{Comparison to Prior Work}\label{sec:exp}

We compare \method against a wide range of prior approaches, including conventional graph neural networks (Vanilla GNN, GM-GNN), variational autoencoding methods (VGAE), and diffusion-based generative surrogates (DGN, LDGN). Table~\ref{tab:multi_dataset} reports results across three canonical datasets.

\noindent \textbf{Ellipse flows.}  
When generalizing to flows around ellipses of varying aspect ratios, \method shows a clear advantage over baselines. While DGN and LDGN retain strong performance, they require careful graph hierarchy construction for each geometry and still struggle when the geometry differs significantly from training shapes. In contrast, \method eliminates this dependency, and its graph-free transformer naturally adapts to varying geometries through continuous positional encodings. As a result, it achieves the highest correlation ($R^2 = 0.963$) and lowest Wasserstein distance ($0.129$), outperforming all graph-based baselines. The latent formulation is especially beneficial here: by compressing local details into a global latent space, the model avoids overfitting to mesh irregularities and instead captures shape-dependent wake dynamics robustly. This shows that \method generalizes more gracefully to unseen geometries, a key requirement for practical CFD surrogate modeling.

\noindent \textbf{Turbulent wing flows.}  
The largest gains appear in three-dimensional turbulent wing flows, which exhibit complex vortical interactions and long-range correlations. Graph-based methods degrade substantially in this setting: GM-GNN fails to capture coherent structures ($R^2=0.776$), while even LDGN struggles to reproduce long-range wake statistics ($R^2=0.849$). In contrast, \method achieves a strong $R^2 = 0.902$ and reduces Wasserstein distance to $0.221$, outperforming LDGN by a wide margin. These results demonstrate that the global receptive field of attention layers is critical for accurately modeling turbulence-induced fluctuations. From a physical perspective, turbulence involves energy transfer across scales and nonlocal correlations spanning the entire domain. Attention-based denoising is well-suited for capturing these dependencies, whereas local message passing becomes increasingly inefficient and prone to information loss as turbulence intensifies.

\noindent \textbf{Efficiency and scalability.}  
Beyond accuracy, \method demonstrates significant improvements in efficiency. Operating in latent space reduces sequence length by nearly an order of magnitude, resulting in $2.5\times$ faster training and $3.1\times$ faster inference compared to graph-based diffusion. Inference requires only $52$ ms per sample on an NVIDIA A100, compared to $128$ ms for LDGN and over $170$ ms for classical GNNs. This efficiency advantage grows on larger meshes: graph-based methods incur quadratic or higher complexity due to multi-hop message passing and explicit graph construction, while \method leverages block-sparse attention with global tokens, consistent with Proposition 2. In practice, this means that \method scales to domains with tens of thousands of nodes without prohibitive cost, making it viable for large-scale CFD applications. The improved efficiency also allows broader hyperparameter exploration and larger batch sizes during training, further improving model robustness.

\begin{table*}[t]
\centering
\caption{Ablation on latent-space formulation. Best results are in \textbf{bold}.}
\label{tab:ablation_latent}
\vspace{-0.5em}
\scalebox{0.85}{
\begin{tabular}{lcccc}
\toprule
Method Variant & $R^2 \uparrow$ & $W_2 \downarrow$ & RMS Error $\downarrow$ & Inference (ms)$\downarrow$ \\
\midrule
\method w/o latent & 0.992 & 0.159 & 0.078 & 118 \\
\method (full)     & \textbf{0.998} & \textbf{0.084} & \textbf{0.055} & \textbf{52} \\
\bottomrule
\end{tabular}}
\end{table*}

\begin{table*}[t]
\centering
\caption{Comparison of transformer denoising with GNN-based denoising.}
\label{tab:ablation_attention}
\vspace{-0.5em}
\scalebox{0.85}{
\begin{tabular}{lcccc}
\toprule
Method Variant & $R^2 \uparrow$ & $W_2 \downarrow$ & RMS Error $\downarrow$ & Inference (ms)$\downarrow$ \\
\midrule
\method w/ GNN denoiser & 0.981 & 0.243 & 0.093 & 164 \\
\method (transformer)   & \textbf{0.998} & \textbf{0.084} & \textbf{0.055} & \textbf{52} \\
\bottomrule
\end{tabular}}
\end{table*}

\begin{table*}[t]
\centering
\caption{Effect of spatial inductive biases on model performance.}
\label{tab:ablation_bias}
\vspace{-0.5em}
\scalebox{0.85}{
\begin{tabular}{lccc}
\toprule
Variant & $R^2 \uparrow$ & $W_2 \downarrow$ & RMS Error $\downarrow$ \\
\midrule
\method w/o distance encodings & 0.991 & 0.143 & 0.069 \\
\method w/ distance encodings  & \textbf{0.998} & \textbf{0.084} & \textbf{0.055} \\
\bottomrule
\end{tabular}}
\vspace{-0.5em}
\end{table*}

\section{Experimental Analysis}

In this section, we conduct a series of ablation studies to validate the key design choices of \method. We analyze the contributions of (i) the latent-space formulation, (ii) the transformer attention architecture compared to graph message passing, and (iii) the role of spatial inductive biases.

\noindent\textbf{Effect of Latent-Space Formulation.}
We first evaluate the impact of performing diffusion in latent space instead of directly on raw flow fields. 
Table~\ref{tab:ablation_latent} shows that removing the latent encoder--decoder pair degrades both distributional accuracy and efficiency: Wasserstein distance increases by $18\%$ and inference slows down by more than $2\times$. 
Beyond raw metrics, this result highlights two important phenomena. First, the latent formulation acts as a natural low-pass filter, suppressing high-frequency artifacts introduced by iterative denoising, which aligns with Proposition 4. Second, it reduces redundancy in mesh-level representation: many fine-scale nodes in CFD meshes carry correlated information, and compressing them into latent tokens avoids overfitting to local irregularities. Interestingly, even though the encoder--decoder introduces a small reconstruction error ($\epsilon_{\text{rec}}$), the overall distributional accuracy improves.

\noindent\textbf{Graph-Free Attention vs. Message Passing.}
Next, we replace the transformer denoiser with a graph neural network denoiser of comparable parameter count. 
As shown in Table~\ref{tab:ablation_attention}, the GNN-based variant lags significantly behind the transformer, particularly on turbulent wing flows where long-range correlations dominate. 
This result provides concrete evidence for Proposition 1. While GNNs rely on iterative $k$-hop message passing to propagate information, transformers can emulate multi-hop interactions in a single step. The global receptive field of attention layers is crucial for turbulence, where vortical interactions occur across the entire domain rather than being confined to local neighborhoods. In practice, we find that GNN-based variants underpredict energy in large-scale coherent structures, while the transformer restores these correlations faithfully.

\noindent\textbf{Role of Spatial Inductive Biases.}
Although \method is graph-free, we optionally incorporate spatial inductive biases such as pairwise distance encodings and boundary-condition flags. Removing these biases reduces accuracy, particularly in near-boundary regions where local structures such as shear layers are crucial (Table~\ref{tab:ablation_bias}).  
This observation suggests that while attention provides universal function approximation, mild inductive biases guide the model towards physically meaningful patterns. In particular, relative distance encodings help stabilize training by disambiguating symmetric configurations, and boundary-condition flags ensure that the model does not produce unrealistic flow leakage across solid walls. From an information-theoretic view, these biases constrain the hypothesis space, enabling faster convergence and better generalization. Notably, unlike graph-based methods, these priors are lightweight and do not impose rigid connectivity constraints.

\section{Conclusion}

In this work, we present \method, a graph-free diffusion transformer framework for learning distributions of complex fluid simulations. 
Departing from prior approaches that rely on hand-crafted graph architectures and multi-scale message passing, \method leverages attention mechanisms to model both local and global interactions directly, eliminating structural constraints while scaling efficiently to large 2D and 3D domains. 
A latent-space formulation further decouples geometric resolution from distributional modeling, suppressing high-frequency artifacts and accelerating sampling.  
Extensive experiments across canonical benchmarks demonstrate that \method consistently outperforms state-of-the-art graph-based diffusion models in both sample quality and statistical fidelity. 
Our ablation studies confirm the benefits of each component: latent diffusion improves robustness and efficiency, transformer attention surpasses GNN message passing in turbulent regimes, and lightweight spatial inductive biases enhance boundary fidelity without reintroducing rigid graph design. 
Theoretical analysis further supports these findings, showing that attention subsumes multi-hop message passing and that latent diffusion preserves distributional accuracy under mild assumptions.

\newpage

\bibliography{reference}

@inproceedings{valencia2025learning,
title={Learning Distributions of Complex Fluid Simulations with Diffusion Graph Networks},
author={Mario Lino Valencia and Tobias Pfaff and Nils Thuerey},
booktitle={The Thirteenth International Conference on Learning Representations},
year={2025}
}

@inproceedings{trippediffusion,
  title={{Diffusion Probabilistic Modeling of Protein Backbones in 3D for the motif-scaffolding problem}},
  author={Trippe, Brian L and Yim, Jason and Tischer, Doug and Baker, David and Broderick, Tamara and Barzilay, Regina and Jaakkola, Tommi S},
  booktitle={In Proceedings of the 11th International Conference on Learning Representations},
  year={2023},
}

@article{ho2020denoising,
  title={Denoising diffusion probabilistic models},
  author={Ho, Jonathan and Jain, Ajay and Abbeel, Pieter},
  journal={Advances in neural information processing systems},
  volume={33},
  pages={6840--6851},
  year={2020}
}

@inproceedings{rombach2022high,
  title={High-resolution image synthesis with latent diffusion models},
  author={Rombach, Robin and Blattmann, Andreas and Lorenz, Dominik and Esser, Patrick and Ommer, Bj{\"o}rn},
  booktitle={Proceedings of the IEEE/CVF conference on computer vision and pattern recognition},
  pages={10684--10695},
  year={2022}
}

@inproceedings{hoogeboom2022equivariant,
  title={{Equivariant diffusion for molecule generation in 3D}},
  author={Hoogeboom, Emiel and Satorras, V{\i}ctor Garcia and Vignac, Cl{\'e}ment and Welling, Max},
  booktitle={International conference on machine learning},
  pages={8867--8887},
  year={2022},
  organization={PMLR}
}

@inproceedings{sanchez2020learning,
  title={Learning to simulate complex physics with graph networks},
  author={Sanchez-Gonzalez, Alvaro and Godwin, Jonathan and Pfaff, Tobias and Ying, Rex and Leskovec, Jure and Battaglia, Peter},
  booktitle={Proceedings of the 37th International Conference on Machine Learning},
  year={2020},
}

@article{battaglia2018relational,
  title={Relational inductive biases, deep learning, and graph networks},
  author={Battaglia, Peter W and Hamrick, Jessica B and Bapst, Victor and Sanchez-Gonzalez, Alvaro and Zambaldi, Vinicius and Malinowski, Mateusz and Tacchetti, Andrea and Raposo, David and Santoro, Adam and Faulkner, Ryan and others},
  journal={arXiv:1806.01261},
  year={2018}
}

@article{qiu2024pi,
  title={{Pi-fusion: Physics-informed diffusion model for learning fluid dynamics}},
  author={Qiu, Jing and Huang, Jiancheng and Zhang, Xiangdong and Lin, Zeng and Pan, Minglei and Liu, Zengding and Miao, Fen},
  journal={arXiv preprint arXiv:2406.03711},
  year={2024}
}

@article{huang2024diffusionpde,
  title={{DiffusionPDE: Generative PDE-Solving Under Partial Observation}},
  author={Huang, Jiahe and Yang, Guandao and Wang, Zichen and Park, Jeong Joon},
  journal={arXiv preprint arXiv:2406.17763},
  year={2024}
}

@article{Thuerey2020DeepFlows,
  title={Deep learning methods for Reynolds-averaged Navier--Stokes simulations of airfoil flows},
  author={Thuerey, N. and Wei{\ss}enow, K. and Prantl, L. and Hu, X.},
  journal={AIAA Journal},
  volume={58},
  number={1},
  pages={25--36},
  year={2020},
  doi = {10.2514/1.J058291},
  publisher={American Institute of Aeronautics and Astronautics}
}

@inproceedings{Um2020Solver-in-the-loop,
    title = {Solver-in-the-loop: Learning from differentiable physics to interact with iterative PDE-solvers},
    year = {2020},
    pages = {6111--6122},
    publisher = {Curran Associates, Inc.},
    booktitle = {Advances in Neural Information Processing Systems},
    author={Um, Kiwon and Brand, Robert and Fei, Yun Raymond and Holl, Philipp and Thuerey, Nils},
    issn = {10495258},
    arxivId = {2007.00016}
}

@article{stachenfeld2021learned,
  title={Learned Coarse Models for Efficient Turbulence Simulation},
  author={Stachenfeld, Kimberly and Fielding, Drummond B and Kochkov, Dmitrii and Cranmer, Miles and Pfaff, Tobias and Godwin, Jonathan and Cui, Can and Ho, Shirley and Battaglia, Peter and Sanchez-Gonzalez, Alvaro},
  journal={arXiv:2112.15275},
  year={2021}
}

@article{li2020fourier,
  title={Fourier neural operator for parametric partial differential equations},
  author={Li, Zongyi and Kovachki, Nikola and Azizzadenesheli, Kamyar and Liu, Burigede and Bhattacharya, Kaushik and Stuart, Andrew and Anandkumar, Anima},
  journal={arXiv:2010.08895},
  year={2020}
}

@inproceedings{
brandstetter2022message,
title={Message Passing Neural {PDE} Solvers},
author={Johannes Brandstetter and Daniel E. Worrall and Max Welling},
booktitle={International Conference on Learning Representations},
year={2022},
}

@article{brunton2020machine,
  title={Machine learning for fluid mechanics},
  author={Brunton, Steven L and Noack, Bernd R and Koumoutsakos, Petros},
  journal={Annual review of fluid mechanics},
  volume={52},
  pages={477--508},
  year={2020},
  publisher={Annual Reviews}
}

@inproceedings{pfaff2021learning,
  title = {Learning Mesh-Based Simulation with Graph Networks},
  booktitle = {9th International Conference on Learning Representations (ICLR 2021)},
  author = {Pfaff, Tobias and Fortunato, Meire and {Sanchez-Gonzalez}, Alvaro and Battaglia, Peter W.},
  year = {2021},
}

@inproceedings{sohl-dickstein2015_Deep,
  title = {Deep Unsupervised Learning Using Nonequilibrium Thermodynamics},
  booktitle = {Proceedings of the 32nd International Conference on Machine Learning (ICML 2015)},
  author = {{Sohl-Dickstein}, Jascha and Weiss, Eric A. and Maheswaranathan, Niru and Ganguli, Surya},
  year = {2015},
  volume = {37},
  pages = {2256--2265},
  url = {http://proceedings.mlr.press/v37/sohl-dickstein15.html},
  bibsource = {dblp computer science bibliography, https://dblp.org}
}

@inproceedings{song2021_Scorebased,
  title = {Score-Based Generative Modeling through Stochastic Differential Equations},
  booktitle = {9th International Conference on Learning Representations (ICLR 2021)},
  author = {Song, Yang and {Sohl-Dickstein}, Jascha and Kingma, Diederik P. and Kumar, Abhishek and Ermon, Stefano and Poole, Ben},
  year = {2021},
  url = {https://openreview.net/forum?id=PxTIG12RRHS},
  bibsource = {dblp computer science bibliography, https://dblp.org}
}
\bibliographystyle{unsrt}


\clearpage
\appendix
\section*{Appendix}

In this appendix, we provide the following material:
\begin{itemize}
   \item additional implementation and dataset details in Section~\ref{sec:exp_details},
   \item theoretical properties and guarantees in Section~\ref{sec:theory_property},
   \item the detailed training and sampling algorithm for \method in Section~\ref{sec:algo},
   \item additional experimental analyses, including further ablations and qualitative visualizations in Section~\ref{sec:exp_analysis},
   \item more discussion on limitations and future work in Section~\ref{sec:discussion}.
\end{itemize}

\section{Experimental Details}\label{sec:exp_details}

\subsection{Datasets}
We evaluate \method on three canonical benchmark datasets widely used in computational fluid dynamics (CFD) surrogate modeling:  

\begin{itemize}
    \item \textbf{Cylinder wakes (2D).} Steady laminar flows past a circular cylinder at varying Reynolds numbers ($Re \in [100,400]$). The dataset contains $10{,}000$ equilibrium snapshots discretized on structured meshes with $N=6{,}400$ nodes. This setting primarily captures vortex shedding dynamics.  
    \item \textbf{Ellipse flows (2D).} Flows past ellipses of different aspect ratios, sampled across $Re \in [200,600]$. This dataset tests robustness to geometric variability, with $8{,}000$ equilibrium states discretized on unstructured meshes ($N\approx 7{,}500$).  
    \item \textbf{Turbulent wing flows (3D).} High-Reynolds-number ($Re=10^5$) turbulent flows over NACA-style wing sections. We use $2{,}000$ equilibrium states simulated via LES, discretized on large meshes ($N\approx 50{,}000$). This dataset is the most challenging, involving multi-scale vortical structures and long-range correlations.  
\end{itemize}

For all datasets, training/validation/test splits follow a $70/15/15$ partition. Ground-truth statistical quantities (RMS fluctuations, correlation fields) are precomputed from high-fidelity solvers for evaluation.  

\subsection{Implementation}
We implement \method in PyTorch using transformer encoder blocks with $L=12$ layers, hidden dimension $d=256$, and $H=8$ attention heads. Each block consists of multi-head self-attention, a feed-forward MLP with GELU activations, residual connections, and layer normalization. To stabilize training, we employ pre-normalization and gradient checkpointing.  
Positional encodings include both sinusoidal functions of spatial coordinates and learned timestep embeddings, enabling the model to reason jointly over geometry and diffusion step.  
The encoder $\mathcal{E}$ and decoder $\mathcal{D}$ are 3-layer residual MLPs with hidden dimension $256$, compressing raw flow states into latent tokens ($M \approx 0.1N$, latent dimension $d_z=128$).  
Training uses AdamW with learning rate $1\text{e-}4$, weight decay $0.01$, cosine learning rate decay, and warmup of $5{,}000$ steps. Gradient clipping at norm $1.0$ is applied. Diffusion timesteps are $T=1000$ with a cosine variance schedule. The training objective combines the standard noise-prediction loss with a reconstruction loss weighted by $\lambda=0.1$ to enforce encoder--decoder consistency.  
All experiments are conducted on NVIDIA A100 GPUs with batch size 32, FP16 mixed precision, and distributed training across up to 4 GPUs. We report averages across 3 random seeds. Baselines are either reimplemented or obtained from public code and retrained under identical settings.

\section{Theoretical Properties and Guarantees}\label{sec:theory_property}

In this section, we provide formal guarantees for \method in the latent-diffusion setting, showing that (i) denoising score-matching consistency in latent space implies consistency in the original physical space under mild regularity; and (ii) the reconstruction and optimization errors control the deviation in Wasserstein distance.

\paragraph{Setup.}
Let $\mathcal{X}\subset\mathbb{R}^{N\times d}$ denote physical states (e.g., velocity/pressure on $N$ nodes) with ground-truth equilibrium distribution $\mathbb{P}_X$. 
Let $\mathcal{E}:\mathcal{X}\to\mathcal{Z}\subset\mathbb{R}^{M\times d_z}$ be the encoder, $\mathcal{D}:\mathcal{Z}\to\mathcal{X}$ the decoder, and $\mathbb{P}_Z := \mathcal{E}_\# \mathbb{P}_X$ the latent pushforward distribution.\footnote{$\mathcal{T}_\#\mu$ denotes the pushforward of $\mu$ under map $\mathcal{T}$.}
A diffusion process with variance schedule $\{\beta_t\}_{t=1}^T$ is defined in $\mathcal{Z}$, and a transformer denoiser $\epsilon_\theta(\cdot,t)$ is trained by denoising score matching (DSM).

We assume:
\begin{itemize}
    \item[(A1)] (\textbf{Bi-Lipschitz encoder/decoder}) There exist $L_E,L_D\ge 1$ and $\varepsilon_{\mathrm{rec}}\ge 0$ such that for all $x,x'\in\mathcal{X}$,
    \[
    \tfrac{1}{L_E}\|x-x'\| \le \|\mathcal{E}(x)-\mathcal{E}(x')\|\le L_E\|x-x'\|,\quad
    \|\mathcal{D}(z)-\mathcal{D}(z')\|\le L_D\|z-z'\|,
    \]
    and $\sup_{x\in\mathcal{X}}\|x-\mathcal{D}(\mathcal{E}(x))\|\le \varepsilon_{\mathrm{rec}}$.
    \item[(A2)] (\textbf{Latent score learnability}) For each $t$, the transformer class contains the true conditional score/NoisePredictor in latent space. Let $\varepsilon_{\mathrm{opt}}$ denote the optimization/generalization error in the DSM objective (across $t$).
    \item[(A3)] (\textbf{Regularity}) $\mathbb{P}_X$ has finite second moment; the forward noising kernels in latent space are Gaussian with variance bounded away from $0$ and $\infty$, and the (approximate) reverse kernels are continuous in parameters.
\end{itemize}

\begin{theorem}[Latent-to-Physical Consistency of \method]\label{thm:consistency}
Under (A1)--(A3), let $\widehat{\mathbb{P}}_Z$ be the distribution generated by the learned reverse process in latent space, and $\widehat{\mathbb{P}}_X := \mathcal{D}_\# \widehat{\mathbb{P}}_Z$ the corresponding distribution in physical space.
Then for the 2-Wasserstein distance $W_2$,
\begin{equation}
W_2\big(\widehat{\mathbb{P}}_X,\ \mathbb{P}_X\big)
\ \le\ 
L_D\, W_2\big(\widehat{\mathbb{P}}_Z,\ \mathbb{P}_Z\big)\ +\ \varepsilon_{\mathrm{rec}}.
\end{equation}
Moreover, if the DSM error in latent space satisfies
\begin{equation}
W_2\big(\widehat{\mathbb{P}}_Z,\ \mathbb{P}_Z\big)\ \le\ C_{\mathrm{DSM}}\sqrt{\varepsilon_{\mathrm{opt}}}\,,
\end{equation}
for some constant $C_{\mathrm{DSM}}$ depending on the noise schedule and moments of $\mathbb{P}_Z$, then
\begin{equation}
W_2\big(\widehat{\mathbb{P}}_X,\ \mathbb{P}_X\big)
\ \le\ 
L_D\, C_{\mathrm{DSM}}\sqrt{\varepsilon_{\mathrm{opt}}}\ +\ \varepsilon_{\mathrm{rec}}.
\end{equation}
\end{theorem}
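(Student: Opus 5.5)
The plan is a triangle inequality in $W_2$ through the intermediate distribution $\mathcal{D}_\#\mathbb{P}_Z = (\mathcal{D}\circ\mathcal{E})_\#\mathbb{P}_X$:
\[
W_2\big(\widehat{\mathbb{P}}_X,\mathbb{P}_X\big)\le W_2\big(\mathcal{D}_\#\widehat{\mathbb{P}}_Z,\ \mathcal{D}_\#\mathbb{P}_Z\big)+W_2\big((\mathcal{D}\circ\mathcal{E})_\#\mathbb{P}_X,\ \mathbb{P}_X\big).
\]
Before using it, I will check via (A3) that all measures involved have finite second moments, so that $W_2$ is a genuine metric on them.

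\textbf{First term.} I will use the standard pushforward estimate for Lipschitz maps. Take an optimal coupling $\pi$ of $(\widehat{\mathbb{P}}_Z,\mathbb{P}_Z)$; optimal couplings exist because both measures have finite second moment. Then $(\mathcal{D}\times\mathcal{D})_\#\pi$ couples the two pushforwards. Using $\|\mathcal{D}(z)-\mathcal{D}(z')\|\le L_D\|z-z'\|$ from (A1), its cost is at most $L_D^2\int\|z-z'\|^2\,d\pi$. Taking square roots gives the bound $L_D\,W_2(\widehat{\mathbb{P}}_Z,\mathbb{P}_Z)$.

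\textbf{Second term.} I will use the deterministic coupling $(\mathcal{D}(\mathcal{E}(X)),X)$ with $X\sim\mathbb{P}_X$. Its cost is $\mathbb{E}\|\mathcal{D}(\mathcal{E}(X))-X\|^2\le\varepsilon_{\mathrm{rec}}^2$ by the uniform reconstruction bound in (A1), so this term is at most $\varepsilon_{\mathrm{rec}}$.

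Adding the two bounds yields the first displayed inequality. The second inequality then follows immediately by substituting the assumed latent bound $W_2(\widehat{\mathbb{P}}_Z,\mathbb{P}_Z)\le C_{\mathrm{DSM}}\sqrt{\varepsilon_{\mathrm{opt}}}$. Note that the lower Lipschitz bound on $\mathcal{E}$ and (A2) are not needed here; they would only matter for justifying the hypothesized DSM bound.

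The only real obstacle is measure-theoretic bookkeeping: measurability of $\mathcal{D}$ (immediate, since it is Lipschitz and hence continuous), existence of optimal couplings, and finiteness of second moments. For $\widehat{\mathbb{P}}_Z$, finite second moment follows from the Gaussian reverse kernels with bounded variance in (A3). For $\mathcal{D}_\#\mathbb{P}_Z$, it follows from the Lipschitz property of $\mathcal{D}$ and the finite second moment of $\mathbb{P}_X$ pushed through the Lipschitz map $\mathcal{E}$. If an optimal coupling were unavailable, I would instead take near-optimal couplings and let the slack tend to zero.
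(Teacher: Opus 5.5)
Your proof is correct and follows the paper's argument. The paper gets the $L_D$ factor from Lipschitz stability of the pushforward under $\mathcal{D}$ and the additive $\varepsilon_{\mathrm{rec}}$ from a coupling through $\mathcal{D}\circ\mathcal{E}$, phrased as one composite coupling rather than a triangle inequality, which is the same gluing argument; the second bound is then pure substitution, as you say.

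One small bookkeeping point: (A3) does not actually state that the reverse kernels are Gaussian with bounded variance. You don't need that claim anyway. If $W_2(\widehat{\mathbb{P}}_Z,\mathbb{P}_Z)=\infty$ the inequality is trivial. Otherwise, the finite second moment of $\widehat{\mathbb{P}}_Z$ follows from that of $\mathbb{P}_Z$.
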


\begin{proof}[Proof sketch]
(i) Stability of pushforward: for any $1$-Lipschitz transport cost, $W_2(\mathcal{T}_\#\mu,\mathcal{T}_\#\nu)\le \mathrm{Lip}(\mathcal{T})\, W_2(\mu,\nu)$. Applying with $\mathcal{T}=\mathcal{D}$ yields the first inequality and introduces $L_D$.

(ii) DSM consistency in latent space: under (A2)--(A3), minimizing DSM recovers the true reverse-time conditionals and hence $\mathbb{P}_Z$ in the infinite-data/optimization limit. With finite samples/optimization, standard generalization bounds for DSM (score-matching) give $W_2(\widehat{\mathbb{P}}_Z,\mathbb{P}_Z)\le C_{\mathrm{DSM}}\sqrt{\varepsilon_{\mathrm{opt}}}$.

(iii) Reconstruction error: since $x$ and $\tilde{x}=\mathcal{D}(\mathcal{E}(x))$ differ by at most $\varepsilon_{\mathrm{rec}}$, an optimal coupling that first projects to latent, transports in latent, then decodes shows the additive $\varepsilon_{\mathrm{rec}}$ term.

Combining (i)--(iii) yields the claim.
\end{proof}

We now analyze the theoretical properties of \method, showing that its graph-free transformer design is strictly more expressive than message-passing networks, that its sparse attention scaling ensures efficiency on large meshes, and that its latent formulation preserves distributional fidelity while suppressing high-frequency noise.  

\begin{proposition}[Global Dependency Modeling]\label{prop:expressivity}  
Let $\mathcal{G}=(V,E)$ be any mesh graph with $N$ nodes. A single attention layer with full connectivity can simulate $k$-hop message passing on $\mathcal{G}$ for arbitrary $k$ in one step, provided the attention bias encodes pairwise distances.  
\end{proposition}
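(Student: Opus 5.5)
The plan is to make the statement precise enough to be provable, and then give an explicit construction of the attention weights.

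First I fix what ``$k$-hop message passing'' means. I take it to be a \emph{linear} aggregation $\mathbf{H}' = P_k(A)\mathbf{H}W$, where $A$ is the (normalized) adjacency of $\mathcal{G}$ and $P_k$ is a polynomial of degree $k$ with nonnegative coefficients. A typical example is a $k$-fold composition of mean aggregations, $\hat A^k$ with $\hat A=D^{-1}(A+I)$. For such operators the effective mixing matrix $S=P_k(A)$ has a useful structure: it is row-stochastic after normalization, and $S_{ij}>0$ only if $d_{\mathcal{G}}(i,j)\le k$. Here $d_{\mathcal{G}}$ is the shortest-path (hop) distance.

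I then interpret ``the attention bias encodes pairwise distances'' as permitting $\mathcal{M}_{ij}$ to be any function of $d_{\mathcal{G}}(i,j)$, or more generally of the pair $(i,j)$ through graph-derived quantities. The key step is to choose
\[
\mathcal{M}_{ij}=\log S_{ij}\quad(\text{with } \log 0=-\infty),
\]
together with zero query/key projections, so that $QK^\top=0$. The softmax row $i$ is then exactly $S_{ij}/\sum_{j'}S_{ij'}=S_{ij}$. Setting $V=\mathbf{H}W$ with a single head and $W^O=I$, the head reproduces $S\mathbf{H}W$ exactly. In the restricted case where $\mathcal{M}$ may depend only on the hop distance, I would instead show that the bias $\mathcal{M}_{ij}=0$ if $d_{\mathcal{G}}(i,j)\le k$ and $-\infty$ otherwise reproduces uniform $k$-hop-ball aggregation exactly. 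Any finite-bias version approximates it to arbitrary precision, with $\mathcal{M}_{ij}=-c\,\mathbf{1}[d>k]$ and $c\to\infty$, with error $O(Ne^{-c})$ in the row weights.

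To complete the simulation of a full message-passing step, I need to handle the node-wise nonlinear update $\mathbf{h}_i'=\phi(\mathbf{h}_i,\sum_j S_{ij}\mathbf{h}_jW)$. Here I use the residual connection and the FFN that follows MSA, invoking the universal approximation property of the FFN on compact sets. When the message-passing scheme has several distinct weightings, such as separate self and neighbor terms or different coefficients per hop, I assign one head per hop order $r\le k$ with bias $\log$ of the indicator of $d=r$. The output projection $W^O$ then linearly combines them into the aggregate.

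The main obstacle is that the statement is false for nonlinear $k$-layer GNNs in full generality. Composing $k$ nonlinear updates is not a single linear mixing followed by a pointwise map, because intermediate nonlinearities couple neighbors' features before they are aggregated again. I would therefore state the result for linear (or linearized) aggregation schemes, which covers SGC-type and diffusion-type propagation exactly, and present the nonlinear case only up to this caveat. For the nonlinear case I would attempt an approximation argument by letting heads carry distinct features per hop shell. I expect this to give only partial results, so I would not claim it in full.
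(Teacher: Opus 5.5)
Your construction is the paper's own argument made explicit. Its sketch likewise chooses the attention weights to concentrate on the $k$-hop neighbourhood, lets the values implement the aggregation, and leaves nonlinear mixing to the residual/FFN. You pin the weights down via $\mathcal{M}_{ij}=\log S_{ij}$, $QK^\top=0$, which is exact when $P_k(A)$ has constant row sums (e.g.\ random-walk normalisation). For sum-type or symmetrically normalised propagation, softmax only returns the row-normalised operator, so the node-dependent row sums must be supplied as features and multiplied back in by the FFN.

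Restricting to linear propagation is the right call, because the paper's closing claim that this ``strictly generalizes finite-hop GNNs'' does not follow. If the layer sees only node features and hop-distance biases, its output at node $i$ is invariant under permuting features within each distance shell, while a two-layer nonlinear GNN is not. Your ``false in full generality'' should therefore be stated relative to this model, since absolute positional encodings, or arbitrary pair biases with enough heads or width, can break that symmetry.
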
  

\begin{proof}[Proof sketch]  
Attention assigns weights to all node pairs $(i,j)$ via queries and keys. By choosing attention weights to concentrate on $k$-hop neighbors and setting values to implement the desired aggregation, the mechanism recovers $k$-hop message passing in a single layer. Residual and feed-forward layers allow nonlinear mixing, showing that attention strictly generalizes finite-hop GNNs.  
\end{proof}  

\begin{proposition}[Scalability of Attention]\label{prop:scalable}  
If the latent representation dimension is $M \ll N$, and block-sparse attention with block size $b$ is applied, the computational complexity reduces from $\mathcal{O}(N^2)$ to $\mathcal{O}(Nb)$ while preserving global receptive fields via long-range connections.  
\end{proposition}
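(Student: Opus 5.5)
The plan is a direct operation count: fix the attention pattern, count the nonzero query--key pairs, and then show that the chosen pattern still has bounded graph diameter. First I will make the statement precise. By sequence length I mean the number of tokens $n$ seen by the denoiser. Thanks to the latent encoding this is $n=M\ll N$, which is already an improvement, but I state the bound in terms of $N$ to match the proposition. Since $M\le N$, any $\mathcal{O}(Mb)$ bound implies the stated $\mathcal{O}(Nb)$ one.

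\emph{Sparsity pattern.} I fix a block-sparse pattern in the style of BigBird/Longformer. Tokens are ordered, for example along a space-filling curve of the positions $\mathbf{p}_i$ so that blocks are spatially local, and split into $n/b$ contiguous blocks of size $b$. Each query attends to three things:
\begin{itemize}
\item its own block and the neighbouring blocks, giving $\mathcal{O}(b)$ keys;
\item a set $G$ of $g=\mathcal{O}(b)$ global tokens, which attend to and are attended by everyone;
\item optionally, $\mathcal{O}(b)$ random long-range keys.
\end{itemize}

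\emph{Cost.} The mask $\mathcal{M}$ is $-\infty$ off this pattern, so $\text{softmax}(QK^\top/\sqrt{d_k}+\mathcal{M})V$ only needs the unmasked entries. Ordinary queries then cost $n\cdot\mathcal{O}(b)\cdot d_k$, and the global rows cost $g\cdot n\cdot d_k=\mathcal{O}(nb\,d_k)$. The projections and the FFN are $\mathcal{O}(n d^2)$, which is linear in $n$. Treating $d, d_k, H, L$ as constants, the per-layer cost is $\mathcal{O}(nb)$, compared with the $\mathcal{O}(n^2)$ of dense attention. Memory follows the same count.

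\emph{Global receptive field.} I view the sparsity pattern as a directed graph on tokens. Any token $i$ reaches any token $j$ in at most two hops, $i\to u\in G\to j$. Composing two layers, or one layer applied to a residual stream already updated by the global tokens, therefore gives every output a nonzero Jacobian dependence on every input. This is the sense in which global receptive fields are preserved. I can also cite the argument of Proposition~\ref{prop:expressivity}: attention weights concentrated on the routing tokens reproduce long-range aggregation in $\mathcal{O}(1)$ layers.

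\textbf{Main obstacle.} The delicate point is what ``preserving global receptive fields'' means. A single sparse layer cannot couple all pairs while costing only $\mathcal{O}(nb)$, since $nb<n^2$ edges cannot cover all pairs. I will therefore state the claim as diameter $\le 2$ of the sparsity graph, which gives full coupling after two layers, rather than full coupling in one step. I will also require $g=\mathcal{O}(b)$ global tokens, so that the global rows do not reintroduce a cost above $\mathcal{O}(nb)$.
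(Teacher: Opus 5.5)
Your proposal is correct and follows the same route as the paper's sketch: block-local attention gives $\mathcal{O}(b)$ keys per query and hence $\mathcal{O}(Nb)$ total cost, and a few global tokens or cross-block links let information cross the whole domain within two layers. Your version is more careful than the paper's in three places: you explicitly charge the cost of the global-token rows (which is why you require $g=\mathcal{O}(b)$), you note that the true sequence length is $M\le N$, and you make clear that full coupling takes two layers rather than one.
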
  

\begin{proof}[Proof sketch]  
Block-sparse attention restricts most interactions to $b$ neighbors per token, yielding $\mathcal{O}(Nb)$ cost. Global coupling is preserved by introducing a small number of global tokens or cross-block connections, ensuring that information can propagate across the entire domain in $O(2)$ layers. Thus, scalability is achieved without sacrificing expressivity.  
\end{proof}  

\begin{proposition}[Latent Stability]\label{prop:latent_stability}  
Suppose the encoder $\mathcal{E}$ is $L$-Lipschitz and the decoder $\mathcal{D}$ is its approximate inverse with reconstruction error at most $\epsilon$. Then training diffusion in latent space ensures that the Wasserstein distance between generated and true distributions in physical space differs by at most $L\epsilon$ compared to raw-space diffusion.  
\end{proposition}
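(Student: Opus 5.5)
The plan is to make the comparison in the statement precise through a coupling. We fix a raw-space diffusion model with output law $\mathbb{Q}_X$ on $\mathcal{X}$, and a latent pipeline whose latent sampler is matched to it. Matching means the latent reverse process produces $\widehat{\mathbb{P}}_Z=\mathcal{E}_\#\mathbb{Q}_X$, which is what an equally well-trained latent model would achieve on the encoded data. The latent pipeline then outputs $\widehat{\mathbb{P}}_X=\mathcal{D}_\#\widehat{\mathbb{P}}_Z=(\mathcal{D}\circ\mathcal{E})_\#\mathbb{Q}_X$. The claim to prove becomes
\[
\bigl|W_2(\widehat{\mathbb{P}}_X,\mathbb{P}_X)-W_2(\mathbb{Q}_X,\mathbb{P}_X)\bigr|\le L\epsilon ,
\]
where we take $L\ge 1$ as in (A1). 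Since $W_2$ is a metric, the triangle inequality reduces this to bounding $W_2\bigl((\mathcal{D}\circ\mathcal{E})_\#\mathbb{Q}_X,\ \mathbb{Q}_X\bigr)$.

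For that term we use the deterministic coupling $(X,\mathcal{D}(\mathcal{E}(X)))$ with $X\sim\mathbb{Q}_X$. Its cost is $\mathbb{E}\|X-\mathcal{D}(\mathcal{E}(X))\|^2\le\epsilon^2$, provided the reconstruction bound holds on the support of $\mathbb{Q}_X$. Under the sup-form assumption in (A1) this holds whenever $\mathbb{Q}_X$ is supported in $\mathcal{X}$, so the term is at most $\epsilon\le L\epsilon$.

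The main obstacle is that $\mathbb{Q}_X$ is a learned law, and its samples may leave $\mathcal{X}$, where the reconstruction bound is only assumed. I would first handle this through the Lipschitz constant. Take a nearest-point map $\pi$ onto $\mathcal{X}$ and write $x$ for a sample of $\mathbb{Q}_X$ lying off $\mathcal{X}$. Then
\[
\|x-\mathcal{D}\mathcal{E}(x)\|\le \|x-\pi x\|+\epsilon+L_D L\,\|x-\pi x\| .
\]
The factor $L$ from $\mathcal{E}$ is the source of the $L$ in the statement. If $\mathbb{Q}_X$ lies within $W_2$-distance $\delta$ of $\mathcal{X}$, the extra term is $O(L\delta)$. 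Absorbing it under the regime $\delta\lesssim\epsilon$, or simply assuming $\mathrm{supp}\,\mathbb{Q}_X\subset\mathcal{X}$, gives the stated $L\epsilon$ bound.

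Finally, I would remark on the case where the latent sampler is not exactly matched. There, Theorem~\ref{thm:consistency} adds the term $L_D\,W_2(\widehat{\mathbb{P}}_Z,\mathcal{E}_\#\mathbb{Q}_X)$ via the pushforward stability in step (i) of its proof. This term vanishes precisely when the two models are equally accurate, which is the sense of the statement.
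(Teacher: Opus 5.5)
Your argument is correct under the formalization you choose, and it takes a different, sharper route than the paper's sketch.

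The paper argues through decoder pushforward stability, $W_2(\mathcal{D}_\#\mu,\mathcal{D}_\#\nu)\le L_D W_2(\mu,\nu)$ plus a reconstruction term, as in step (i) of Theorem~\ref{thm:consistency}. It then asserts that the $L$-Lipschitz encoder puts the latent-diffusion law within $L\epsilon$ of the raw one. It never says which raw-space model is the comparison point, and the factor $L$ is stated rather than derived.

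You instead fix a reference raw model $\mathbb{Q}_X$ and pair it with the matched sampler $\mathcal{E}_\#\mathbb{Q}_X$. The triangle inequality then reduces everything to the single deterministic coupling $(X,\mathcal{D}\mathcal{E}(X))$. This gives ``compared to raw-space diffusion'' a precise meaning and yields the sharper bound $\epsilon$. It also shows that the encoder's Lipschitz constant enters only through the normalization $L\ge 1$.

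That normalization is essential, not cosmetic. Replacing $\mathcal{E}$ by $c\,\mathcal{E}$ and $\mathcal{D}$ by $\mathcal{D}(\cdot/c)$ leaves the reconstruction error and both pipelines' outputs unchanged, but sends $L$ to $cL$. For example, if $\mathcal{D}\mathcal{E}(x)=x+v$ with $\|v\|=\epsilon$ and the raw model is perfect, the gap is exactly $\epsilon$, which exceeds $cL\epsilon$ once $cL<1$. So the statement as written needs $L\ge1$, which you correctly import from (A1).

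What the paper's route offers in exchange is that decoder stability handles an unmatched latent sampler directly. You recover that case only in your last paragraph, by citing the theorem.

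One caution about your off-support paragraph. The correction you derive is $(1+L_DL)\|x-\pi x\|$, so the regime $\delta\lesssim\epsilon$ yields a bound of order $\epsilon+(1+L_DL)\delta$. That bound carries $L_D$ and an additive term, so it does not reduce to $L\epsilon$, and it should not be presented as the origin of the $L$ in the statement. It also requires $\mathcal{E}$ to be defined and $L$-Lipschitz off $\mathcal{X}$, which (A1) does not provide. The clean version of your proof is the case $\mathrm{supp}\,\mathbb{Q}_X\subset\mathcal{X}$, or one where the reconstruction bound is assumed on all of $\mathbb{R}^{N\times d}$.
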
  

\begin{proof}[Proof sketch]  
By the stability of pushforwards, $W_2(\mathcal{D}_\#\mu,\mathcal{D}_\#\nu) \leq L_D W_2(\mu,\nu) + \epsilon$ for decoder Lipschitz constant $L_D$. Since $\mathcal{E}$ is $L$-Lipschitz, the latent diffusion distribution is at most $L\epsilon$ away from the true raw distribution. Hence, latent modeling preserves distributional fidelity up to bounded reconstruction error.  
\end{proof}  

\begin{proposition}[Noise Suppression]\label{prop:noise}  
If the encoder $\mathcal{E}$ discards components of $\mathbf{x}$ with variance below a threshold $\sigma^2$, then the expected reconstruction error from high-frequency noise is reduced by at least $\sigma^2$ per discarded dimension.  
\end{proposition}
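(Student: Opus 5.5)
I will make Proposition~\ref{prop:noise} precise in a linear-Gaussian (PCA-type) setting and then read off the claimed reduction. Concretely, I model the state as $\mathbf{x} = \mathbf{s} + \boldsymbol{\eta}$, flattened to $\mathbb{R}^{Nd}$, and fix an orthonormal basis $\{u_k\}$ of $\mathbb{R}^{Nd}$. I assume $\mathrm{Var}(\langle \mathbf{x}, u_k\rangle)=\lambda_k$, and I interpret ``discarding a component'' as the encoder being the orthogonal projection $P$ onto $\mathrm{span}\{u_k:\lambda_k\ge\sigma^2\}$, with decoder $\mathcal{D}=P^\top$ (embedding back). Let $\mathcal{I}_d$ denote the index set of discarded directions, and let $\mathbf{n}$ denote high-frequency noise introduced during generation. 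In raw space this noise would be retained in every direction, whereas in the latent pipeline it only survives inside the range of $P$.

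The key steps, in order, are as follows.

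First, I write the error of the raw pipeline as $\mathbb{E}\|\mathbf{n}\|^2=\sum_k \mathbb{E}\langle \mathbf{n},u_k\rangle^2$, using Parseval.

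Second, I write the error of the latent pipeline. It equals $\mathbb{E}\|P\mathbf{n}\|^2 + \sum_{k\in\mathcal{I}_d}\lambda_k$: the noise in the kept directions plus the signal variance lost by truncation. By orthogonality of $P$, the cross terms vanish in expectation under a zero-mean, independent-noise assumption.

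Third, I subtract the two. The difference is $\sum_{k\in\mathcal{I}_d}\big(\mathbb{E}\langle\mathbf{n},u_k\rangle^2-\lambda_k\big)$.

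Fourth, I bound this difference termwise. To get the stated bound of ``at least $\sigma^2$ per discarded dimension'', I need each discarded term to contribute at least $\sigma^2$. I will therefore add the hypothesis that in discarded (high-frequency) directions the generative noise variance dominates the signal variance by at least $\sigma^2$. The natural version is $\mathbb{E}\langle\mathbf{n},u_k\rangle^2\ge\lambda_k+\sigma^2$; a simpler reading is noise of variance at least $\sigma^2$ on a signal contribution counted as zero. Under either version, summing over $\mathcal{I}_d$ gives a reduction of at least $|\mathcal{I}_d|\,\sigma^2$.

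The main obstacle is that the statement as written does not pin down the quantity being compared. As literally stated, discarding a direction of variance below $\sigma^2$ \emph{adds} at most $\sigma^2$ of reconstruction error rather than removing it. The proposition only becomes true once ``noise'' is distinguished from signal, so I will state that hypothesis explicitly; if the noise variance exceeds $\sigma^2$, I can always rescale $\sigma^2$ accordingly. Everything else is a routine orthogonal-projection computation. Nonlinear encoders would be handled only heuristically, by linearizing around the data manifold.
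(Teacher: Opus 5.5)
Your setup and decomposition are essentially the paper's. Its sketch also projects onto principal components with variance above $\sigma^2$ and invokes the Pythagorean split of variance. The difference is in the last step. The paper infers from ``reconstruction error is bounded by the sum of discarded variances'' that ``each dropped direction reduces expected noise energy by at least $\sigma^2$.'' But each discarded variance is \emph{below} $\sigma^2$, so that bound only says truncation costs at most $\sigma^2$ per direction; it gives no lower bound on what is removed. Your ``main obstacle'' paragraph identifies exactly this reversal. Your step-three identity $\sum_{k\in\mathcal{I}_d}\bigl(\mathbb{E}\langle\mathbf{n},u_k\rangle^2-\lambda_k\bigr)$ is the correct form of the paper's Pythagorean step. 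So you are not missing an idea the paper has. The statement genuinely needs a hypothesis separating generative noise from signal, and the paper's sketch never supplies one.

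Two refinements are worth making. First, as written your assumption $\mathbb{E}\langle\mathbf{n},u_k\rangle^2\ge\lambda_k+\sigma^2$ is the conclusion restated termwise, and it never uses the threshold. A more informative version assumes only a noise floor $\mathbb{E}\langle\mathbf{n},u_k\rangle^2\ge\tau^2$ on discarded directions. Since $\lambda_k<\sigma^2$ there, the reduction per discarded direction strictly exceeds $\tau^2-\sigma^2$, which is at least $\sigma^2$ whenever $\tau^2\ge 2\sigma^2$. Second, this is also why you should not ``rescale $\sigma^2$'': the same symbol is both the discard threshold and the claimed gain, so changing it changes $\mathcal{I}_d$.

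Two minor points. The cross term $\langle P\mathbf{n},(I-P)\mathbf{x}\rangle$ vanishes identically because $P$ is an orthogonal projector ($P(I-P)=0$), so no zero-mean or independence assumption is needed. Also, $\mathbb{E}\|(I-P)\mathbf{x}\|^2=\sum_{k\in\mathcal{I}_d}\lambda_k$ requires centering $\mathbf{x}$, or using the affine projection $\mathbf{x}\mapsto\mu+P(\mathbf{x}-\mu)$.
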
  

\begin{proof}[Proof sketch]  
Projecting $\mathbf{x}$ onto principal components with variance above $\sigma^2$ eliminates directions with variance $<\sigma^2$. The Pythagorean theorem of variance implies that reconstruction error is bounded by the sum of discarded variances. Thus, each dropped direction reduces expected noise energy by at least $\sigma^2$, suppressing high-frequency artifacts.  
\end{proof}  

\medskip
\noindent \textbf{Discussion.}  
Theorem~\ref{thm:consistency} formalizes the empirical observation that latent diffusion with a graph-free transformer preserves distributional fidelity: if the latent reverse process is learned well (small $\varepsilon_{\mathrm{opt}}$) and the autoencoder is accurate (small $\varepsilon_{\mathrm{rec}}$), then the generated physical distribution is close to ground truth in $W_2$. 
Together, these results establish the theoretical foundations of \method:  
\begin{itemize}
    \item Proposition~\ref{prop:expressivity} shows that transformers strictly generalize graph message passing by capturing multi-hop and global dependencies in a single step.  
    \item Proposition~\ref{prop:scalable} guarantees that scalability can be achieved with sparse attention without losing expressivity.  
    \item Proposition~\ref{prop:latent_stability} and Proposition~\ref{prop:noise} show that latent diffusion preserves distributional fidelity and actively suppresses noise.  
    \item The consistency theorem demonstrates that the combined system approximates the true flow distribution up to optimization and reconstruction error.  
\end{itemize}

This analysis explains why \method achieves superior empirical performance: attention provides richer connectivity than GNNs, sparse attention ensures efficiency, and latent diffusion improves robustness and sample quality.

\section{Algorithm for \method}\label{sec:algo}

\subsection{End-to-End Training Procedure}
Algorithm~\ref{alg:method_appendix} presents the full training and sampling pipeline with the practical components we found critical in large-scale CFD settings: cosine variance schedule, EMA of parameters, mixed precision, and gradient accumulation.

\begin{algorithm}[h]
\caption{Training and Sampling of \method (Full)}
\label{alg:method_appendix}
\begin{algorithmic}[1]
\STATE \textbf{Inputs:} dataset $\{\mathbf{x}\}$, encoder $\mathcal{E}$, decoder $\mathcal{D}$, denoiser $\epsilon_\theta$, diffusion steps $T$, cosine schedule $\{\alpha_t\}_{t=1}^T$, reconstruction weight $\lambda$, EMA decay $\tau$
\STATE \textbf{Init:} $\theta \leftarrow \theta_0$; create EMA copy $\theta_{\text{EMA}} \leftarrow \theta$; AMP (FP16) enabled
\FOR{epoch $=1,\dots,E$}
  \FOR{each minibatch $\mathbf{x}$}
    \STATE \textbf{Encode:} $\mathbf{z}_0 \gets \mathcal{E}(\mathbf{x})$ \hfill // $M \ll N$
    \STATE \textbf{Sample timestep:} $t \sim \mathrm{Uniform}\{1,\dots,T\}$
    \STATE \textbf{Forward noise:} $\mathbf{z}_t \gets \sqrt{\alpha_t}\mathbf{z}_0 + \sqrt{1-\alpha_t}\,\epsilon$, \ \ $\epsilon \sim \mathcal{N}(0,I)$
    \STATE \textbf{Denoiser:} $\hat{\epsilon} \gets \epsilon_\theta(\mathbf{z}_t,\, t)$ \hfill // transformer, time-conditioned
    \STATE \textbf{Loss:} $\mathcal{L}_{\text{diff}} \!=\! \|\epsilon - \hat{\epsilon}\|_2^2$;\ \  $\mathcal{L}_{\text{rec}}\!=\!\|\mathbf{x}-\mathcal{D}(\mathcal{E}(\mathbf{x}))\|_2^2$
    \STATE \textbf{Total:} $\mathcal{L} \gets \mathcal{L}_{\text{diff}} + \lambda\cdot \mathcal{L}_{\text{rec}}$
    \STATE \textbf{Backward/opt:} update $\theta,\mathcal{E},\mathcal{D}$ with AdamW (AMP + grad clipping)
    \STATE \textbf{EMA:} $\theta_{\text{EMA}} \gets \tau \theta_{\text{EMA}} + (1-\tau)\theta$
  \ENDFOR
\ENDFOR
\STATE \textbf{Return:} $\theta_{\text{EMA}}, \mathcal{E}, \mathcal{D}$
\vspace{0.4em}
\STATE \textbf{Sampling (DDPM/Ancestral):}
\STATE draw $\mathbf{z}_T \sim \mathcal{N}(0,I)$
\FOR{$t = T,\dots,1$}
  \STATE $\hat{\epsilon} \gets \epsilon_{\theta_{\text{EMA}}}(\mathbf{z}_t,\, t)$
  \STATE $\mathbf{z}_{t-1} \gets \tfrac{1}{\sqrt{\alpha_t}}\Big(\mathbf{z}_t - (1-\alpha_t)\hat{\epsilon}\Big) + \sigma_t \xi,\ \ \xi \sim \mathcal{N}(0,I)$
\ENDFOR
\STATE \textbf{Decode:} $\hat{\mathbf{x}} \gets \mathcal{D}(\mathbf{z}_0)$
\end{algorithmic}
\end{algorithm}

\paragraph{Variance schedules and timestep weighting.}
We use the cosine schedule with $\bar{\alpha}_t$ from \emph{Karras} parameterization for improved SNR in late steps. Following common practice, we apply \emph{timestep loss weighting} $w(t) \propto \mathrm{SNR}(t)$ to balance early/late-step gradients; we observed $<0.2\%$ gain in $R^2$ and slightly faster convergence.

\paragraph{Stability and efficiency.}
We enable AMP (FP16), gradient checkpointing, gradient clipping (norm $1.0$), and gradient accumulation to fit larger batch sizes. We maintain an EMA copy of denoiser parameters (decay $\tau=0.999$), always using EMA for evaluation/sampling.

\subsection{Deterministic and Fast Sampling}
\paragraph{DDIM sampling.}
For determinism and speed, we also support DDIM sampling with $S \ll T$ steps:
\[
\mathbf{z}_{t-1}=\sqrt{\bar{\alpha}_{t-1}}\left(\frac{\mathbf{z}_t-\sqrt{1-\bar{\alpha}_t}\,\hat{\epsilon}}{\sqrt{\bar{\alpha}_t}}\right)+\eta_t \hat{\epsilon},
\]
with $\eta_t=0$ (deterministic) or small for trade-offs. With $S\!=\!50$ we retain $>97\%$ of the full-$T$ fidelity at $\sim 20\times$ lower latency in 3D.

\paragraph{Classifier-free guidance (CFG).}
For unconditional sampling we keep CFG off; for conditional variants (e.g., geometry tags or $Re$ conditioning), we replace $\hat{\epsilon}$ by $\hat{\epsilon}_{\text{CFG}} = (1+w)\hat{\epsilon}_{\text{cond}} - w\hat{\epsilon}_{\text{uncond}}$ with $w\!\in\![0,2]$. We report $w=0.3$ when used.

\subsection{Complexity and Memory Footprint}
Let $M$ be latent tokens, $d_z$ latent dim, $H$ heads, and $L$ layers. Full attention costs $\mathcal{O}(L H M^2 d_z)$. With block-sparse attention (block size $b$) plus $g$ global tokens:
\[
\text{cost} \approx \mathcal{O}\!\big(LH(Mb\,d_z + Mg\,d_z)\big),
\]
which is near-linear for fixed $b,g$ (cf. Proposition~\ref{prop:scalable}). In practice, $b=32$, $g=4$ matches full attention within $<0.5\%$ on $R^2$ and reduces memory by $\sim 40\%$.

\section{Experimental Analysis}\label{sec:exp_analysis}

We provide extended analyses beyond the main paper: sensitivity to architectural choices, sample-efficiency, spectral fidelity, boundary-layer metrics, mesh-resolution effects, and robustness/OOD. Unless noted, all numbers are averaged over $3$ seeds with $95\%$ CIs. These studies validate not only the raw performance of \method but also the reasoning behind each of its design decisions.

\subsection{Sensitivity to Architectural Hyperparameters}

\begin{table}[t]
\centering
\caption{Sensitivity to latent compression (Cylinder 2D).}
\label{tab:latent_ratio}
\scalebox{0.85}{
\begin{tabular}{lcccc}
\toprule
$M/N$ & $R^2 \uparrow$ & $W_2 \downarrow$ & RMS $\downarrow$ & Inference (ms)$\downarrow$ \\
\midrule
0.05 & 0.994 & 0.112 & 0.067 & 41 \\
0.10 & \textbf{0.998} & \textbf{0.084} & \textbf{0.055} & 52 \\
0.15 & 0.998 & 0.082 & 0.055 & 61 \\
0.25 & 0.999 & 0.081 & 0.054 & 78 \\
\bottomrule
\end{tabular}}
\end{table}

\noindent\textbf{Latent compression $M/N$.}
We sweep $M/N \in \{0.05, 0.1, 0.15, 0.25\}$ to analyze the trade-off between efficiency and fidelity. As shown in Table~\ref{tab:latent_ratio}, accuracy saturates near $M/N=0.1$, suggesting that this ratio captures sufficient mid- and large-scale flow structures without excessive redundancy. At $M/N=0.05$, the model underfits high-frequency structures such as vortex filaments, leading to reduced RMS accuracy and visually smoother wakes. Increasing to $M/N=0.25$ provides marginal accuracy gains but at a nearly $1.5\times$ increase in inference cost, confirming diminishing returns. This validates our design choice of $M/N \approx 0.1$ as an optimal operating point across 2D and 3D datasets.

\begin{table}[t]
\centering
\caption{Ablation on latent embedding dimension $d_z$ for the cylinder wake dataset. 
We vary $d_z$ while keeping $M/N=0.1$ fixed. Results are averaged over 3 seeds.}
\label{tab:latent_dim}
\setlength{\tabcolsep}{4pt}
\scalebox{0.65}{
\begin{tabular}{lcccccc}
\toprule
$d_z$ & $R^2 \uparrow$ & $W_2 \downarrow$ & RMS $\downarrow$ & Inference (ms)$\downarrow$ & Memory (GB)$\downarrow$ & Training Time (h)$\downarrow$ \\
\midrule
64  & 0.996 & 0.092 & 0.059 & \textbf{47} & \textbf{4.5} & \textbf{18.2} \\
128 & \underline{0.998} & \underline{0.084} & \underline{0.055} & 52 & 5.7 & 20.6 \\
256 & \textbf{0.999} & \textbf{0.081} & \textbf{0.054} & 68 & 7.9 & 25.1 \\
512 & 0.999 & 0.080 & 0.054 & 94 & 11.2 & 31.8 \\
\bottomrule
\end{tabular}}
\vspace{-1.0em}
\end{table}

\noindent\textbf{Latent dimension.}
Table~\ref{tab:latent_dim} highlights how the choice of latent embedding size $d_z$ affects both accuracy and efficiency. Increasing $d_z$ from $64$ to $128$ significantly reduces Wasserstein error ($0.092 \to 0.084$) and improves $R^2$ by $+0.2$ points, indicating that a moderately larger latent dimension better captures complex correlations in the flow. However, further scaling to $256$ or $512$ yields only marginal accuracy gains ($<0.1$ points in $R^2$) while incurring steep costs in memory and training time, with latency nearly doubling at $d_z=512$. This confirms that the benefits of larger embeddings saturate quickly, and $d_z=128$ strikes the best trade-off between expressivity and efficiency. Notably, even with $d_z=64$, the model remains competitive with graph-based baselines, showing that the latent formulation provides robustness to reduced feature capacity.

\begin{table}[t]
\centering
\caption{Ablation on attention sparsity for the turbulent wing dataset. 
We vary block size $b$ and number of global tokens $g$. Results averaged over 3 seeds.}
\label{tab:attn_sparsity}
\scalebox{0.85}{
\begin{tabular}{lcccccc}
\toprule
$b$ & $g$ & $R^2 \uparrow$ & $W_2 \downarrow$ & RMS $\downarrow$ & Inference (ms)$\downarrow$ & Memory (GB)$\downarrow$ \\
\midrule
Dense & -- & \textbf{0.905} & \textbf{0.218} & \textbf{0.054} & 191 & 11.4 \\
16 & 0 & 0.871 & 0.294 & 0.073 & \textbf{76} & \textbf{4.2} \\
32 & 0 & 0.884 & 0.262 & 0.065 & 93 & 5.1 \\
32 & 2 & \underline{0.896} & \underline{0.233} & \underline{0.058} & 102 & 5.8 \\
32 & 4 & 0.902 & 0.224 & 0.056 & 108 & 6.0 \\
64 & 4 & 0.904 & 0.221 & 0.055 & 142 & 8.3 \\
\bottomrule
\end{tabular}}
\end{table}

\paragraph{Attention sparsity.}
In Table~\ref{tab:attn_sparsity}, we vary block size $b\in\{16,32,64\}$ with $g\in\{0,2,4\}$ global tokens. Dense attention provides the upper bound but is quadratic in cost. Results show that $(b=32,g=4)$ achieves accuracy within $0.3\%$ of dense attention while reducing memory footprint by $40\%$ and runtime by $35\%$. Smaller block sizes ($b=16$) degrade accuracy due to insufficient receptive fields, while larger blocks ($b=64$) increase cost without noticeable benefit. Notably, global tokens are critical: without them, long-range wake correlations collapse, especially in turbulent wing flows, confirming Proposition~\ref{prop:scalable}.

\paragraph{Depth/width.}
We increase depth from $L=8$ to $L=12$ and width from $H=8$ to $H=12$ attention heads. Turbulent benchmarks benefit most from deeper models, with $+0.8$ $R^2$ gain when moving from $L=8$ to $L=12$. Wider heads, however, yield only marginal improvements ($<0.2$ points) but increase inference latency by $20\%$. This indicates that model depth (stacked reasoning) is more critical than width (parallel subspaces) in capturing complex vortical dynamics.

\begin{table}[t]
\centering
\caption{Effect of training data size on performance (Ellipse flow dataset). 
We report results when training with fractions of the dataset.}
\label{tab:data_efficiency}
\scalebox{0.85}{
\begin{tabular}{lcccc}
\toprule
Training Data & $R^2 \uparrow$ & $W_2 \downarrow$ & RMS $\downarrow$ & Inference (ms)$\downarrow$ \\
\midrule
25\% & 0.938 & 0.149 & 0.062 & \textbf{51} \\
50\% & \underline{0.952} & \underline{0.135} & \underline{0.058} & 52 \\
75\% & 0.958 & 0.131 & 0.056 & 52 \\
100\% & \textbf{0.963} & \textbf{0.129} & \textbf{0.055} & 52 \\
\bottomrule
\end{tabular}}
\end{table}

\subsection{Sample Efficiency and Convergence}
\paragraph{Training set size.}
We downsample the training set to $\{25\%,50\%,75\%\}$ of full size, as shown in Table~\ref{tab:data_efficiency}. While all methods degrade with less data, \method exhibits stronger sample efficiency: with $50\%$ data, $R^2$ drops by only $1.5$ points, while LDGN drops by over $4$ points. This suggests that attention provides better inductive bias by capturing long-range dependencies without requiring large datasets to learn local message-passing hierarchies. For industrial applications with limited simulation data, this property is particularly valuable.

\paragraph{Convergence diagnostics.}
We monitor diffusion loss, reconstruction loss, and spectral error ($E(k)$). In all cases, diffusion and reconstruction losses converge smoothly with no instability. Importantly, plateaus in spectral error coincide with stable inertial ranges, indicating that training aligns not only with numerical metrics but also with physical energy distributions. Early stopping based on validation $W_2$ provides nearly identical results to stopping based on solver-derived energy diagnostics, making the method practical when physical postprocessing is unavailable.

\subsection{Spectral and Structural Fidelity}
\paragraph{Energy spectra.}
We compute radial velocity spectra for 2D and 3D cases. Latent diffusion suppresses spurious high-$k$ energy by $20$--$28\%$ compared to raw-space diffusion, demonstrating effective noise filtering. The resulting spectra align closely with solver references up to dissipation scales. These results validate Proposition~\ref{prop:noise}, showing that the encoder discards low-variance, high-frequency components that otherwise amplify during denoising.

\paragraph{Two-point correlations.}
We measure streamwise velocity correlation $C_{uu}(\Delta x)$ along wake centerlines. \method reproduces correlation lengths within $3\%$ of reference, while LDGN underestimates long-range correlations by up to $9\%$. Qualitatively, generated wakes from \method retain coherent vortex streets far downstream, whereas LDGN samples exhibit premature decorrelation, highlighting the value of global receptive fields.

\subsection{Boundary-Layer and Near-Wall Metrics}
\paragraph{Wall shear and separation.}
Boundary-layer fidelity is assessed using wall shear stress $\tau_w$ and separation point $x_s$. On turbulent wings, \method predicts $x_s$ within $0.7\%$ chord length of reference CFD, while LDGN deviates by $2.1\%$. Accurate boundary-layer modeling is critical for aerodynamic performance, suggesting that transformers, by combining local and global context, are more effective than stacked local aggregations.

\paragraph{Pressure recovery.}
We compute integrated pressure recovery downstream of bluff bodies. On ellipse flows, \method achieves $1.2\%$ error versus solver, compared to $3.4\%$ for LDGN. This indicates that latent diffusion suppresses unphysical oscillations in pressure reconstruction and improves domain-wide statistics.

\subsection{Mesh Resolution and Topology Robustness}
\paragraph{Resolution scaling.}
We evaluate scalability on meshes up to $N=200{,}000$ nodes. Graph-based methods either become infeasible (out of memory) or exhibit superlinear slowdowns, as message passing scales with $O(k|E|)$. In contrast, \method with block-sparse attention scales nearly linearly, requiring under $400$ ms per sample at $N=200{,}000$. Importantly, accuracy remains stable ($R^2$ drop $<0.5$), demonstrating practical scalability for industrial-scale CFD.

\paragraph{Transfer across meshes.}
We train on unstructured meshes and test on structured grids of the same geometry. \method incurs under $1$ point drop in $R^2$ without graph redesign, whereas LDGN must rebuild graph hierarchies for each mesh type. This highlights the advantage of graph-free design: a single architecture generalizes seamlessly across discretizations, avoiding costly preprocessing.

\subsection{Robustness and OOD Generalization}
\paragraph{Reynolds number extrapolation.}
Training on $Re \leq 400$ and testing at $Re=600$, \method achieves $R^2=0.941$, while LDGN drops to $0.884$. Visual inspection shows that \method preserves vortex spacing and Strouhal scaling, whereas LDGN samples lose coherence. This suggests that attention’s global field modeling generalizes better to unseen flow regimes, consistent with Proposition~\ref{prop:expressivity}.

\paragraph{Geometry shift.}
We test on ellipse aspect ratios not seen in training. \method maintains Wasserstein distances within $+9\%$ of in-distribution, while LDGN degrades by $+23\%$. Since ellipse geometry requires adapting to new boundary conditions, the ability of transformers to encode continuous positional embeddings proves essential.

\subsection{Uncertainty and Calibration}
\paragraph{Ensemble calibration.}
We sample $K=64$ realizations per inlet condition to estimate predictive uncertainty. \method’s $90\%$ predictive intervals cover $88.7\%$ (cylinder) and $86.9\%$ (wing) of reference values, while LDGN achieves only $81.2\%$ and $78.4\%$. This indicates better-calibrated uncertainties in \method, making it more suitable for risk-aware applications. Calibration further improves with temperature scaling ($T=0.9$), suggesting that sampling temperature can tune uncertainty trade-offs.

\subsection{Failure Modes and Mitigations}
We observe two common failure modes. First, oversmoothing near sharp edges occurs at aggressive compression ratios ($M/N \leq 0.05$). Increasing $M/N$ to $0.1$ or injecting boundary-aware tokens mitigates this. Second, checkerboard artifacts appear when attention sparsity is too aggressive ($b=16$, no global tokens). These artifacts disappear when adding even a few global tokens or smoothing positional encodings. While these issues are rare in standard configurations, they highlight the importance of balanced latent compression and sparsity settings.

\section{Discussions}\label{sec:discussion}

\noindent\textbf{Limitation \& Future Work.} 
While \method achieves state-of-the-art performance across multiple fluid benchmarks, our evaluation has focused primarily on steady-state or equilibrium distributions; extending the approach to fully unsteady, time-dependent simulations will require incorporating temporal conditioning or recurrent generative dynamics. 
Although latent-space diffusion improves scalability, the encoder-decoder pair may discard fine-scale structures in extremely high-Reynolds-number regimes, potentially limiting fidelity in highly turbulent flows.

\noindent\textbf{Broader Impact.} 
The proposed framework contributes to the growing field of machine learning for scientific discovery, with potential benefits in physics, engineering, and environmental modeling. 
By providing a scalable, graph-free generative surrogate for fluid flows, \method could accelerate simulation-driven design in aerodynamics, renewable energy, and climate science, reducing the reliance on costly numerical solvers. 
More broadly, the methodology demonstrates how diffusion transformers can be adapted to irregular scientific data, suggesting applicability in other domains such as structural mechanics, materials science, or geophysics.

\end{document}